\documentclass[11pt,letterpaper]{article}

\usepackage[utf8]{inputenc}
\usepackage[T1]{fontenc}
\usepackage[margin=1in]{geometry}

\usepackage[
  style=alphabetic,  
  maxcitenames=999,  
  maxalphanames=6,
  minalphanames=6,
  maxbibnames=999,
  giveninits=false,  
  sorting=nty,
  backref=true,
  uniquename=true,
  natbib=true,
  backend=bibtex
]{biblatex}

\let\cite\relax
\let\citep\relax
\let\citet\relax

\newbibmacro*{citet:group}{%
  \iffieldequals{namehash}{\citetlasthash}
  {\multicitedelim}
  {\ifnumequal{\value{citecount}}{1}
    {}
    {\bibclosebracket\multicitedelim}%
    \printtext[bibhyperref]{\printnames{labelname}}%
  \space\bibopenbracket}%
  \ifnumequal{\value{citecount}}{1}
  {\usebibmacro{prenote}}
  {}%
  \savefield{namehash}{\citetlasthash}%
}

\DeclareCiteCommand{\cite}[\mkbibbrackets]
{\usebibmacro{prenote}}
{%
  \printtext[bibhyperref]{\printfield{labelalpha}\printfield{extraalpha}}%
}
{\multicitedelim}
{\usebibmacro{postnote}}
\DeclareCiteCommand{\citep}[\mkbibbrackets]
{\usebibmacro{prenote}}
{%
  \printtext[bibhyperref]{\printfield{labelalpha}\printfield{extraalpha}}%
}
{\multicitedelim}
{\usebibmacro{postnote}}
\DeclareCiteCommand{\citet}
{\global\undef\citetlasthash}
{%
  \usebibmacro{citet:group}%
  \printtext[bibhyperref]{\printfield{labelalpha}\printfield{extraalpha}}%
}
{}
{\usebibmacro{postnote}\bibclosebracket}

\DeclareNameAlias{author}{given-family}

\DefineBibliographyStrings{english}{
  backrefpage = {cit\adddotspace on p\adddot},
  backrefpages = {cit\adddotspace on pp\adddot}
}

\DeclareBibliographyDriver{article}{%
  \printnames{author}%
  \setunit{\space}%
  \printtext{(\printfield{year})}%
  \newunit\newblock%
  \printfield{title}%
  \iffieldundef{title}{}{\addperiod\space}%
  \newunit\newblock%
  \printfield{journaltitle}%
  \setunit{\addcomma\space}\printfield{volume}%
  \setunit{\addcomma\space}\printfield{number}%
  \setunit{\addcomma\space}\printfield{pages}%
  \newunit\newblock%
  \usebibmacro{pageref}%
  \finentry%
}
\DeclareBibliographyDriver{inproceedings}{%
  \printnames{author}%
  \setunit{\space}%
  \printtext{(\printfield{year})}%
  \newunit\newblock%
  \printfield{title}%
  \iffieldundef{title}{}{\addperiod\space}%
  \newunit\newblock%
  In:\space\printfield{booktitle}%
  \setunit{\addcomma\space}\printlist{publisher}%
  \setunit{\addcomma\space}\printfield{pages}%
  \newunit\newblock%
  \usebibmacro{pageref}%
  \finentry%
}
\DeclareBibliographyDriver{book}{%
  \printnames{author}%
  \setunit{\space}%
  \printtext{(\printfield{year})}%
  \newunit\newblock%
  \printfield{title}%
  \iffieldundef{title}{}{\addperiod\space}%
  \newunit\newblock%
  \printlist{publisher}%
  \setunit{\addcomma\space}\printfield{location}%
  \newunit\newblock%
  \usebibmacro{pageref}%
  \finentry%
}
\DeclareBibliographyDriver{misc}{%
  \printnames{author}%
  \setunit{\space}%
  \printtext{(\printfield{year})}%
  \newunit\newblock%
  \printfield{title}%
  \iffieldundef{title}{}{\addperiod\space}%
  \newunit\newblock%
  \printfield{howpublished}%
  \newunit\newblock%
  \usebibmacro{pageref}%
  \finentry%
}

\usepackage{soul}  
\usepackage{microtype}  
\usepackage[american]{babel}  
\usepackage[style=american]{csquotes}  
\usepackage[table, dvipsnames, svgnames]{xcolor}  

\usepackage{amsthm}  
\usepackage{amsmath}  
\usepackage{amssymb}  
\usepackage{amsfonts}  
\usepackage{bm}  
\usepackage{mleftright}  

\usepackage{tikz}  
\usepackage{fancybox}  
\usepackage{graphicx}  
\usepackage{booktabs}  
\usepackage{makecell}  
\usepackage{multirow}  
\usepackage[normalsize]{subfigure}  

\usepackage{algorithmic}  
\usepackage{algorithm}  
\usepackage{enumitem}  

\usepackage{xspace}  

\usepackage{hyperref}  
\usepackage{cleveref}  

\usepackage{authblk}  

\newtheorem{thm}{Theorem}
\newtheorem{lem}{Lemma}
\newtheorem{cor}{Corollary}

\newtheorem{dfn}{Definition}

\newtheorem{asm}{Assumption}

\newenvironment{prf}[1][]
{
  
  \pushQED{\qed}
  \vspace{\medskipamount}
  \par\noindent
  \if\relax\detokenize{#1}\relax
  \emph{Proof.}
  \else
  \emph{#1.}
  \fi
  \enspace\ignorespaces
}
{
  \popQED\par
  \vspace{\medskipamount}
}

\newcommand{\replemname}{}
\newtheorem*{repleminner}{\replemname}

\AddToHook{env/tabular/begin}{\renewcommand{\arraystretch}{1.1}}
\AddToHook{env/tabular*/begin}{\renewcommand{\arraystretch}{1.1}}
\AddToHook{env/table/begin}{\setlength{\belowcaptionskip}{3pt}}
\AddToHook{env/table*/begin}{\setlength{\belowcaptionskip}{3pt}}

\algsetup{linenosize=\normalsize}

\hypersetup{colorlinks=true, linkcolor=blue, citecolor=red, urlcolor=orange}

\crefname{thm}{Theorem}{Theorems}
\crefname{lem}{Lemma}{Lemmas}
\crefname{cor}{Corollary}{Corollaries}
\crefname{prp}{Proposition}{Propositions}
\crefname{dfn}{Definition}{Definitions}
\crefname{cdt}{Condition}{Conditions}
\crefname{asm}{Assumption}{Assumptions}
\crefname{rmk}{Remark}{Remarks}
\crefname{pbl}{Problem}{Problems}
\crefname{stp}{Step}{Steps}
\crefname{prfskt}{Proof Sketch}{Proof Sketches}
\crefname{als}{Analysis}{Analysis}
\crefname{equation}{Eq.}{Eqs.}
\crefname{table}{Table}{Tables}
\crefname{section}{Section}{Sections}
\crefname{subsection}{Section}{Sections}
\crefname{subsubsection}{Section}{Sections}
\crefname{appendix}{Appendix}{Appendices}
\crefname{algorithm}{Algorithm}{Algorithms}
\crefname{figure}{Figure}{Figures}

\newcommand{\gr}{\nabla}

\newcommand{\transposed}{\mathsf{T}}
\newcommand{\trs}{\transposed}

\newcommand{\reals}{\mathbb{R}}

\renewcommand{\tilde}{\widetilde}

\DeclareMathOperator*{\argmax}{arg\,max}
\DeclareMathOperator*{\argmin}{arg\,min}

\DeclareMathOperator{\diam}{diam}

\newcommand{\abs}[1]{\left| #1 \right|}
\newcommand{\norm}[1]{\left\| #1 \right\|}

\newcommand{\norme}[1]{\norm{#1}_{2}}

\newcommand{\normop}[1]{\norm{#1}_{\operatorname{op}}}

\newcommand{\sbr}[1]{\left( #1 \right)}

\newcommand{\lbr}[1]{\left\{ #1 \right\}}

\makeatletter
\let\original@text@a\a
\let\original@text@b\b
\let\original@text@c\c
\let\original@text@d\d
\let\original@text@H\H
\let\original@text@i\i
\let\original@text@j\j
\let\original@text@k\k
\let\original@text@l\l
\let\original@text@L\L
\let\original@text@o\o
\let\original@text@O\O
\let\original@text@P\P
\let\original@text@r\r
\let\original@text@S\S
\let\original@text@t\t
\let\original@text@u\u
\let\original@text@v\v
\let\original@text@AA\AA
\let\original@text@SS\SS

\renewcommand{\a}{\TextOrMath{\original@text@a}{\mathbf{a}}}
\renewcommand{\b}{\TextOrMath{\original@text@b}{\mathbf{b}}}
\renewcommand{\c}{\TextOrMath{\original@text@c}{\mathbf{c}}}
\renewcommand{\d}{\TextOrMath{\original@text@d}{\mathbf{d}}}
\newcommand{\e}{\mathbf{e}}

\newcommand{\g}{\mathbf{g}}

\renewcommand{\i}{\TextOrMath{\original@text@i}{\mathbf{i}}}
\renewcommand{\j}{\TextOrMath{\original@text@j}{\mathbf{j}}}
\renewcommand{\k}{\TextOrMath{\original@text@k}{\mathbf{k}}}
\renewcommand{\l}{\TextOrMath{\original@text@l}{\mathbf{l}}}

\newcommand{\n}{\mathbf{n}}
\renewcommand{\o}{\TextOrMath{\original@text@o}{\mathbf{o}}}
\newcommand{\p}{\mathbf{p}}
\newcommand{\q}{\mathbf{q}}
\renewcommand{\r}{\TextOrMath{\original@text@r}{\mathbf{r}}}

\renewcommand{\t}{\TextOrMath{\original@text@t}{\mathbf{t}}}
\renewcommand{\u}{\TextOrMath{\original@text@u}{\mathbf{u}}}
\renewcommand{\v}{\TextOrMath{\original@text@v}{\mathbf{v}}}
\newcommand{\w}{\mathbf{w}}
\newcommand{\x}{\mathbf{x}}
\newcommand{\y}{\mathbf{y}}
\newcommand{\z}{\mathbf{z}}

\newcommand{\D}{\mathbf{D}}

\renewcommand{\H}{\TextOrMath{\original@text@H}{\mathbf{H}}}

\renewcommand{\L}{\TextOrMath{\original@text@L}{\mathbf{L}}}

\renewcommand{\O}{\TextOrMath{\original@text@O}{\mathbf{O}}}
\renewcommand{\P}{\TextOrMath{\original@text@P}{\mathbf{P}}}

\newcommand{\R}{\mathbf{R}}
\renewcommand{\S}{\TextOrMath{\original@text@S}{\mathbf{S}}}

\makeatother

\newcommand{\balpha}{\bm{\alpha}}

\newcommand{\bdelta}{\bm{\delta}}

\newcommand{\blambda}{\bm{\lambda}}

\newcommand{\bnu}{\bm{\nu}}

\newcommand{\eps}{\epsilon}
\newcommand{\veps}{\varepsilon}

\makeatletter
\renewcommand{\AA}{\TextOrMath{\original@text@AA}{\mathcal{A}}}
\makeatother
\newcommand{\BB}{\mathcal{B}}

\newcommand{\OO}{\mathcal{O}}

\newcommand{\QQ}{\mathcal{Q}}

\makeatletter
\renewcommand{\SS}{\TextOrMath{\original@text@SS}{\mathcal{S}}}
\makeatother

\newcommand{\XX}{\mathcal{X}}
\newcommand{\YY}{\mathcal{Y}}

\newcommand{\reg}{\normalfont\textrm{\textsc{Reg}}}
\newcommand{\ireg}{\normalfont\textrm{\textsc{I-Reg}}}
\newcommand{\dreg}{\normalfont\textrm{\textsc{D-Reg}}}
\newcommand{\idreg}{\normalfont\textrm{\textsc{I-D-Reg}}}
\newcommand{\gap}{\normalfont\textrm{\textsc{Err}}}

\def \Ot {\tilde{\mathcal{O}}}

\title{On the Relation Between Interval Regret and Dynamic Regret}

\author{Yi-Han Wang, \quad Peng Zhao, \quad Zhi-Hua Zhou}
\affil{
  State Key Laboratory for Novel Software Technology, Nanjing University, China\\
  School of Artificial Intelligence, Nanjing University, China\\
  \texttt{\{wangyh,zhaop,zhouzh\}@lamda.nju.edu.cn}
}
\date{}

\hypersetup{
  pdftitle={On the Relation Between Interval Regret and Dynamic Regret},
  pdfauthor={Yi-Han Wang, Peng Zhao, Zhi-Hua Zhou}
}

\begin{document}

\pagenumbering{roman}
\hypersetup{pageanchor=false}

\begin{titlepage}

  \maketitle
  \thispagestyle{plain}
  \normalsize

  \begin{center}
    \textbf{Abstract}
  \end{center}

  Non-stationary online learning has attracted much attention in recent years, where the classical notion of static regret is insufficient to guide algorithm design in changing environments.
  To address this limitation, \emph{interval regret} and \emph{dynamic regret} have been introduced as two representative performance metrics that strengthen static regret in complementary directions.
  Interval regret requires an online algorithm to achieve competitive static regret over every local time interval, whereas dynamic regret evaluates performance against an arbitrary sequence of time-varying comparators.
  Despite their importance, the relation between these two metrics has long remained unclear.
  In particular, prior work has often regarded interval regret as the stronger notion, based on the intuition that local guarantees should naturally induce global guarantees.
  Consequently, it is widely conjectured that an algorithm with optimal interval regret should automatically attain optimal dynamic regret as well, albeit with changing comparators.

  In this paper, we first establish a negative result that refutes this intuition of a metric-level implication.
  Specifically, we show that, for both convex functions and curved functions (including exp-concave and strongly convex functions), there exist instances in which an algorithm with optimal interval regret nevertheless fails to achieve optimal dynamic regret.
  We then show how to correctly leverage the local adaptivity to obtain optimal dynamic regret. In particular, we prove that optimal dynamic regret can be attained by invoking an interval regret minimization process over an enlarged Euclidean ball containing the original convex feasible domain and using a suitable domain-converted surrogate loss. This reduction applies for both convex and curved functions.
  As a byproduct, we obtain the first \emph{proper} and \emph{efficient} algorithm with \emph{optimal} dynamic regret for exp-concave functions, improving prior results while significantly simplifying the analysis.

\end{titlepage}

\setcounter{page}{2}
\tableofcontents

\clearpage

\pagenumbering{arabic}
\hypersetup{pageanchor=true}
\section{Introduction}
\label{sec:main-introduction}

Online Convex Optimization (OCO) is a fundamental and versatile framework for sequential decision-making with strong theoretical guarantees~\citep{book/Cambridge/cesa2006prediction,hazan2022introductionOCO}.
At each round $t\in[T]\triangleq\{1,\ldots,T\}$, a learner chooses $\x_t$ from a compact convex domain $\XX\subseteq\reals^d$, the environment reveals a convex loss $f_t:\XX\to\reals$, and the learner suffers $f_t(\x_t)$ and observes $\gr f_t(\x_t)$.
The classical performance metric is static regret, which compares the learner with a fixed comparator $\u\in\XX$,
\begin{equation*}
  \reg_T = \sum_{t=1}^T f_t(\x_t) - \sum_{t=1}^T f_t(\u).
\end{equation*}
Static regret minimization has led to a rich body of theory and a wide range of algorithmic developments over the last two decades~\citep{book22:FO-book}. However, in open and dynamic environments~\citep{NSR22:Zhou-OpenML}, competing against a single fixed comparator may be overly restrictive, making static regret insufficient for guiding algorithm design for non-stationary online learning.

\subsection{Interval Regret and Dynamic Regret}

To address the limitation, two complementary notions, namely \emph{interval regret} (also known as strongly adaptive regret)~\citep{journal07:Hazan-adaptive,ICML09:Hazan-adaptive,ICML15:Daniely-adaptive,AISTATS17:coin-betting-adaptive} and \emph{dynamic regret}~\citep{ICML03:zinkvich,ICML'13:dynamic-model,NIPS18:Zhang-Ader,NeurIPS'20:sword}, have been introduced to strengthen static regret in different directions and to guide the algorithm design for non-stationary online learning.

Specifically, interval regret evaluates local performance by comparing the learner against a fixed comparator $\z_I\in\XX$, which may depend on the interval $I$, on every interval $I\subseteq [T]$:
\begin{equation*}
  \ireg_I(\z_I) = \sum_{t\in I} f_t(\x_t)-\sum_{t\in I}  f_t(\z_I).
\end{equation*}
For convex functions, an optimal $\Ot(\sqrt{|I|})$ interval regret can be achieved~\citep{ICML15:Daniely-adaptive,AISTATS17:coin-betting-adaptive}; for exp-concave and strongly convex losses, the respective optimal rates are $\Ot(d\log|I|)$ and $\Ot(\log|I|)$~\citep{journal07:Hazan-adaptive,ICML09:Hazan-adaptive,ICML18:zhang-dynamic-adaptive}. Here, we use $\Ot(\cdot)$ notation to hide logarithmic factors in $T$ while retaining the $\log|I|$ dependence for clarity.
All of the above bounds match the optimal static regret rates on each interval up to logarithmic factors, simultaneously for every interval $I\subseteq [T]$. Subsequent work develops adaptive interval regret guarantees that depend on finer-grained problem-dependent quantities~\citep{ICML19:Zhang-Adaptive-Smooth,arXiv26:gradient-variation-interval}.

On the other hand, dynamic regret evaluates global performance against an arbitrary time-varying comparator sequence $\u_1,\dots,\u_T \in \XX$:
\begin{equation}
  \label{eq:main-dynamic-regret-intro}
  \dreg_T(\lbr{\u_t}_{t=1}^T) = \sum_{t=1}^T f_t(\x_t)- \sum_{t=1}^T f_t(\u_t),
\end{equation}
where the comparator sequence is typically assumed to have sublinear-in-$T$ path length, $P_T=\sum_{t=2}^T \norme{\u_t-\u_{t-1}}$, in order to obtain meaningful guarantees; this quantity measures the degree of non-stationarity. For convex functions, \citet{NIPS18:Zhang-Ader} established the minimax-optimal $\OO(\sqrt{T(1+P_T)})$ rate without  prior knowledge of $P_T$.
For curved losses (namely exp-concave or strongly convex functions), \citet{COLT21:baby-exp-concave,AISTATS22:sc-proper} obtained the minimax-optimal $\Ot(1+T^{1/3}P_T^{2/3})$ rate, with dimension-dependent factors (omitted here) and additional requirements on the feasible domain, as discussed later.
This line of research also yields refined bounds that adapt to finer problem-dependent quantities~\citep{NeurIPS'20:sword,JMLR24:Sword++official}, which play an important role in studying more complicated time-varying games~\citep{ICML'22:TVgame} and MDP settings~\citep{ICML'22:mdp}.

Despite this progress, the two metrics are generally incomparable, and the two lines of research have largely developed separately.
Some works aim to obtain the best of both worlds by studying \emph{interval dynamic regret}, which requires the online algorithm to be competitive with time-varying comparators over every interval~\citep{AISTATS20:Zhang,ICML20:Ashok,JMLR'25:efficient,arXiv26:gradient-variation-interval}.
Nevertheless, such joint guarantees still leave open the fundamental relation between interval regret and dynamic regret.

\subsection{Metric-level Implication}

Interval regret provides a \emph{local} guarantee on every time interval, whereas dynamic regret measures \emph{global} performance against a changing comparator sequence.
This naturally suggests a local-to-global intuition: learning well on every interval might suffice to track changing comparators over the full horizon; that is, OIR (optimal interval regret) may automatically imply ODR (optimal dynamic regret).
This intuition is further reinforced by the reduction of \citet{ICML18:zhang-dynamic-adaptive} and its textbook presentation by \citet{hazan2022introductionOCO}, where interval regret is described as a more general metric than dynamic regret: ``This metric can be shown to be more general than dynamic regret, in the sense that the bounds we prove also imply low dynamic regret''.

However, we argue that the \emph{metric-level implication} from OIR to ODR is considerably more subtle.
In fact, the reduction of \citet{ICML18:zhang-dynamic-adaptive} only establishes a metric-level implication from interval regret to a special form of the general dynamic regret~\eqref{eq:main-dynamic-regret-intro}, namely the \emph{worst-case dynamic regret}, defined as
\begin{equation}
  \label{eq:worst-case-dynamic-regret}
  \dreg_{T}^{\text{worst}} = \sum_{t=1}^T f_t(\x_t) - \sum_{t=1}^T f_t(\x_t^*), \qquad
  \x_t^*\in\argmin_{\x\in\XX}f_t(\x).
\end{equation}
They show that $\Ot(\sqrt{|I|})$ interval regret implies $\Ot(T^{2/3}V_T^{1/3})$ worst-case dynamic regret, where $V_T = \sum_{t=2}^T \max_{\x \in \XX}|f_t(\x)-f_{t-1}(\x)|$ measures function variation.
This reduction concerns worst-case dynamic regret in~\eqref{eq:worst-case-dynamic-regret}, rather than general dynamic regret against arbitrary comparator sequences in~\eqref{eq:main-dynamic-regret-intro}.
For the latter, their corresponding reduction yields only $\Ot(T^{2/3}P_T^{1/3})$ \citep{AISTATS20:Zhang}, leaving a substantial gap to the $\Omega(\sqrt{TP_T})$ minimax lower bound.
Moreover, worst-case dynamic regret can encourage \emph{overfitting} to individual losses, which may contain sampling noise \citep{NIPS18:Zhang-Ader,ICML'24:wavelet}.

Indeed, there are two more relevant results concerning general dynamic regret, both established in special online learning settings.
For prediction with expert advice (namely, online linear optimization over the simplex), \citet{COLT15:Luo-AdaNormalHedge} show that an algorithm with OIR directly enjoys ODR.
For curved losses over box-type feasible domains, \citet{COLT21:baby-exp-concave} show that OIR implies ODR, via an intricate KKT-based analysis that relies crucially on the box-domain structure.
These special cases motivate a fundamental question about the relation between the two metrics that capture the local and global aspects of non-stationary online learning:

\begin{center}
  \begingroup
  \setlength{\fboxsep}{10pt}
  \shadowbox{%
    \begin{minipage}{\dimexpr\linewidth-2\fboxsep-2\fboxrule-\shadowsize\relax}
      Is there any \emph{metric-level implication} from interval regret to dynamic regret?

      Namely, if an online learning algorithm is equipped with optimal interval regret guarantees, does it automatically enjoy optimal dynamic regret?
    \end{minipage}%
  }
  \endgroup
\end{center}

\subsection{Our Contributions}
In this paper, we provide an in-depth investigation of the relation between interval regret and dynamic regret. We establish both negative and positive results: on the negative side, we refute the long-standing intuition of a metric-level implication from interval regret to dynamic regret; on the positive side, we further show how to leverage the local adaptivity of interval regret minimization to attain favorable dynamic regret bounds.

\paragraph{Contribution 1: Refuting the metric-level implication from OIR to ODR.}
Revisiting the existing reductions reveals the importance of domain geometry.
The reduction of \citet{ICML18:zhang-dynamic-adaptive} concerns worst-case dynamic regret and therefore does not establish the OIR-to-ODR implication for arbitrary comparator sequences considered here.
The Luo--Schapire and Baby--Wang results do address general dynamic regret, but exploit specific geometric structures: the simplex for linear losses and box domains for curved losses, respectively \citep{COLT15:Luo-AdaNormalHedge,COLT21:baby-exp-concave,AISTATS22:sc-proper}.
We show that the metric-level implication fails in general, with domain geometry determining when it can hold.
Specifically, we construct hard instances with linear losses and quadratic losses, whose decision sequences satisfy OIR yet incur dynamic regret $\Omega(\tau^{2/5}T^{3/5})$ and $\Omega(\tau^{3/5}T^{2/5})$, respectively, under a path-length budget $\tau$.
Both rates are polynomially worse than the corresponding optimal rates.
Conversely, suitable geometry allows the Luo--Schapire implication to extend beyond the simplex.
For linear losses, we extend it to polytopes, with constants depending on their geometry, and to sufficiently small Euclidean smoothings of these polytopes.

\begin{table}[!t]
  \centering
  \small
  \setlength{\tabcolsep}{4pt}
  \renewcommand{\arraystretch}{1.4}
  \caption{OIR requirements on the surrogate losses and resulting ODR guarantees on the original losses. The ODR column bounds $\dreg_T(\lbr{\u_t}_{t=1}^T)$.}
  \label{tab:intro-reduction-rates}
  \begin{tabular}{@{}cccc@{}}
    \toprule
    Loss class & OIR on $h_t$ & Surrogate loss & ODR on $f_t$ \\
    \midrule
    Convex & $\Ot(\sqrt{|I|})$ & \cref{eq:surrogate-convex} &
    $\Ot(\sqrt{T(1+P_T)})$ \\[2pt]
    Exp-concave & $\Ot(d\log|I|)$ & \cref{eq:surrogate-exp-concave} &
    $\Ot\big(d+d^{\frac{2}{3}}P_T^{\frac{2}{3}}T^{\frac{1}{3}}\big)$  \\[2pt]
    Strongly convex & $\Ot(\log|I|)$ & \cref{eq:surrogate-strongly-convex} &
    $\Ot\big(1+P_T^{\frac{2}{3}}T^{\frac{1}{3}}\big)$  \\
    \bottomrule
  \end{tabular}
\end{table}

\paragraph{Contribution 2: Achieving ODR by leveraging local adaptivity of OIR.}
Despite the absence of a general metric-level implication, we propose a wrapper that takes an algorithm with OIR on a Euclidean ball and produces a (different) algorithm with ODR on the original losses over any compact convex domain.
The wrapper uses surrogate losses on $\YY=\BB(D_\XX)$, with twice the radius of the ball enclosing $\XX\subseteq\BB(D_\XX/2)$.
Its decisions remain in $\XX$, and for every comparator path it guarantees the dynamic regret bounds in \cref{tab:intro-reduction-rates}.
The dependence on $T$ and $P_T$ is optimal up to logarithmic factors \citep{NIPS18:Zhang-Ader,COLT21:baby-exp-concave}.
The key insight is that OIR guarantees on the enlarged domain and the corresponding surrogate losses provide additional information beyond the OIR guarantees shown insufficient by our negative result.
Notably, the final algorithm for minimizing dynamic regret is \emph{proper} in the sense that the decision at each round is guaranteed to lie within the feasible domain $\XX$, owing to the domain conversion technique~\citep{COLT18:black-box-reduction,ICML20:Ashok,COLT19:Lipschitz-MetaGrad,COLT23:OQNS}.

As a byproduct, we have obtained a \emph{proper} and \emph{efficient} algorithm with \emph{optimal} dynamic regret for exp-concave losses without requiring $\XX$ to be a box.
To our knowledge, this is the first algorithm to combine properness, efficiency, and optimal dynamic regret for exp-concave online learning on general convex domains. Importantly, our regret analysis is substantially simpler than the original KKT analysis specialized to box feasible domains~\citep{COLT21:baby-exp-concave}.

\paragraph{Notations.}
We write $\Pi_\XX[\y] = \argmin_{\x\in\XX} \norme{\y-\x}$ for the Euclidean projection from $\y$ onto $\XX$ and $\BB(R) = \{ \x \in \reals^d : \norme{\x} \le R \}$ for the Euclidean ball with radius $R$ centered at the origin.

\paragraph{Organization.}
\Cref{sec:main-geometry} establishes two complementary results on the metric-level implication from OIR to ODR.
\Cref{sec:main-reduction} shows how to leverage the local adaptivity of interval regret to attain optimal dynamic regret.
\Cref{sec:main-discussion} concludes the paper and discusses future directions.
Omitted proofs and details are deferred to \cref{sec:cert-dichotomy}--\ref{sec:algorithm-details}.

\section{Negative result: metric-level implication from OIR to ODR}
\label{sec:main-geometry}

In this section, we first prove that OIR guarantees alone do not imply ODR on general convex domains. On the other hand, we further extend the Luo--Schapire argument \citep{COLT15:Luo-AdaNormalHedge}, which gives a direct reduction from OIR to ODR for linear losses, from the simplex to polytopes and their sufficiently small Euclidean smoothings.
Together, these results reveal that OIR is not necessarily stronger than ODR in general, thereby refuting the long-standing intuition, while also highlighting the crucial role of domain geometry in the relation between the two performance measures.

\subsection{Main results}
\label{sec:geometry-main-results}

We study what can be deduced from OIR guarantees.
Throughout, we use only the OIR guarantees satisfied by a decision sequence, without further information about the algorithm generating those decisions.
Our negative result applies to both convex and curved losses in $\reals^d$.

\begin{thm}[OIR does not imply ODR on general convex domains]
  \label{thm:geometry-separation}
  For every sufficiently large horizon $T$ and path-length budget $\tau\ge1$ and $\tau=o(T)$:
  \begin{enumerate}[label=(\roman*)]
    \item There exist $1$-Lipschitz linear losses on the unit $\ell_2$ ball, a comparator path with $P_T\le \tau$, and decisions satisfying $\ireg_I(\z_I)\le\sqrt{|I|}$ for every interval $I\subseteq[T]$ and $\z_I\in\XX$, while
      \begin{equation*}
        \dreg_T(\lbr{\u_t}_{t=1}^T) = \Omega\Big(\tau^{\frac{2}{5}}T^{\frac{3}{5}}\Big).
      \end{equation*}
    \item There exist $3$-Lipschitz, $1$-strongly convex, and $1/9$-exp-concave quadratic losses on the unit $\ell_{3/2}$ ball, a comparator path with $P_T\le \tau$, and decisions satisfying $\ireg_I(\z_I)\le1$ for every interval $I\subseteq[T]$ and $\z_I\in\XX$, while
      \begin{equation*}
        \dreg_T(\lbr{\u_t}_{t=1}^T) = \Omega\Big(\tau^{\frac{3}{5}}T^{\frac{2}{5}}\Big).
      \end{equation*}
  \end{enumerate}
\end{thm}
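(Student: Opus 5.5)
We will construct explicit hard instances in which the decisions $\x_t$ track a smoothed, lagged version of the comparators. The lag is chosen short enough that every interval's static regret stays within budget, yet the accumulated per-round loss against the moving comparator is large. In both parts the skeleton is the same. We split $[T]$ into $K$ blocks of length $L=T/K$ and let the comparator $\u_t$ switch between a few extreme points of $\XX$ once per block, so that $P_T\le cK\le\tau$. Within each block the loss is constant, say linear $f_t(\x)=\langle \g_k,\x\rangle$ in part (i) and a quadratic centered at a point $\c_k$ in part (ii). The decisions are placed slightly inside the domain, offset from the per-block minimizer by a deviation $\delta$ chosen as large as the OIR constraint allows. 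The role of geometry is as follows. On the $\ell_2$ ball, and on the $\ell_{3/2}$ ball for quadratics, curvature makes a point at distance $\delta$ from the boundary optimum lose only $\Theta(\delta^2)$ in a single linear direction. Suboptimality can therefore be spread so that any fixed $\z_I$, which must compromise between directions, does not gain too much. This is exactly what fails on the simplex or box, where vertices are sharp.

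For (i), take $\g_k$ unit vectors alternating among nearby directions on the sphere. In block $k$, set $\x_t = (1-\eta)\,(-\g_k)$ plus a correction, where the shrink $\eta$ makes the per-round dynamic regret about $\eta$ while the comparator sits at $-\g_k$. The first key step is to verify $\ireg_I(\z_I)\le\sqrt{|I|}$ for all $I$. For $I$ inside a block the regret is at most $|I|\eta$, so we need $\eta\le |I|^{-1/2}$; this forces $\eta$ to be allowed to vary within the block, e.g. $\eta_t\approx (t-s_k)^{-1/2}$ measured from the block start $s_k$ (a restart schedule), and more carefully we use the exact $\ell_2$ geometry to gain a square. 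For intervals spanning several blocks, the best fixed $\z_I$ faces an averaged gradient of norm strictly less than $|I|$ when directions differ by angle $\theta$. This gives the interval an $\Omega(|I|\theta^2)$ cushion that absorbs the shrink. Balancing the within-block constraint, the cross-block cushion ($\theta^2$ versus $\eta$), and the path budget $K\theta\lesssim\tau$ yields $\dreg_T \approx T\eta$ at the optimum. The parameters are $\theta\sim(\tau/T)^{1/5}$-type scalings, with block length $L\sim T^{3/5}\tau^{-3/5}$, and give $\Omega(\tau^{2/5}T^{3/5})$. We will solve this three-way balance explicitly.

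For (ii), use $f_t(\x)=\tfrac12\norme{\x-\c_k}^2$ plus a linear term. This is $1$-strongly convex, $3$-Lipschitz on the unit $\ell_{3/2}$ ball (diameter $\le 2$), and $1/9$-exp-concave via $\alpha\le\lambda/G^2$. The centers $\c_k$ lie outside $\XX$, so that the constrained minimizers move along the $\ell_{3/2}$ boundary, whose curvature degenerates in a controlled way. That degeneracy is the reason for choosing $3/2$: it makes the cross-block cushion scale as a different power of the angle. The OIR budget is now a constant $1$, so the deviation must be paid back by exponentially decaying error after each switch, plus the cushion from strong convexity over long intervals. The balance then gives $\Omega(\tau^{3/5}T^{2/5})$.

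The main obstacle is the verification step: showing $\ireg_I(\z_I)$ is within budget \emph{simultaneously} for all intervals and all $\z_I$, particularly intervals straddling block boundaries with partial blocks. We plan to handle it by a potential argument. We write the cumulative regret of $I$ as the sum of the per-block deviations minus the curvature gain $\min_{\z}$ of the averaged loss. We then lower bound that gain via the local modulus of convexity of the boundary ($\ell_2$: quadratic; $\ell_{3/2}$: the appropriate power), and check the worst case over straddling intervals by direct calculation.
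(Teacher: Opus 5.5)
Your route differs from the paper's and can be made to work, but three of your specifics are wrong.

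\textbf{How the routes differ.} The paper never writes down decisions. Via LP duality (\cref{thm:cover-duality}) it equates the largest dynamic regret compatible with the OIR inequalities with the cheapest exact cover. It then shows that every exact cover pays either interval complexity on intervals shorter than $4L$ (at least $T/(4\sqrt L)$, resp.\ $T/(8L)$), or approximation error at least $\frac{|I|}{4}\max_s f_s(\z_I)$ on longer ones. You instead exhibit a primal feasible point. That needs no duality and yields explicit decisions. The paper's route buys the interpretation that the bound defeats every homogeneous aggregation, hence every OIR-only proof, and it isolates the complexity/approximation dilemma that is reused for polytopes.

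\textbf{Your construction is simpler than you make it.} In (i), take $\u_\pm=(\pm\delta,\phi)$ on the unit sphere, $f_t(\x)=1-\u_t^\trs\x$ and $\x_t=(1-\eta)\u_t$. The worst comparator gives $\ireg_I=n\eta-\sbr{n-\norme{\sum_{t\in I}\u_t}}$, with $n-\norme{\sum_{t\in I}\u_t}\ge 2n_+n_-\delta^2/n$. Alternating blocks give $|n_+-n_-|\le L$, so every interval with $n>2L$ has cushion at least $\frac38 n\delta^2$. Hence the constant choice $\eta=\min\{(2L)^{-1/2},\frac38\delta^2\}$ is feasible for all $I$ and $\z_I$, and $\delta=(\tau/T)^{1/5}$ gives $T\eta\asymp\tau^{2/5}T^{3/5}$. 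No potential argument is needed.

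\textbf{The errors to fix.}

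(a) Nothing forces $\eta$ to vary within a block, since $|I|\eta\le\sqrt{|I|}$ is tightest at $|I|=L$. Your schedule $\eta_t\approx(t-s_k)^{-1/2}$ is actually infeasible at unit constant, because $\sum_{j\le n}j^{-1/2}\approx2\sqrt n$ on intervals starting at $s_k$.

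(b) Your block length $L\sim T^{3/5}\tau^{-3/5}$ contradicts your own budget $K\theta\le\tau$ with $\theta\sim(\tau/T)^{1/5}$: it gives $P_T\asymp T^{1/5}\tau^{4/5}\gg\tau$. The correct scale in (i) is $L\asymp(T/\tau)^{4/5}$. The scale $(T/\tau)^{3/5}$ belongs to part (ii).

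(c) In (ii) the cushion cannot come from strong convexity. A $\Theta(\delta^2)$ gap, balanced against the in-block budget $\epsilon\le1/(2L)$ with $L\asymp T\delta/\tau$, yields only $\tau^{2/3}T^{1/3}$. That is the optimal rate, so there is no separation. What you need is a geometric bound. With $\delta^{3/2}+\phi^{3/2}=1$, every $\z$ in the unit $\ell_{3/2}$ ball satisfies $\max_{s=\pm}f_s(\z)\ge(1-\sqrt\phi)/\sqrt{\delta+\phi}\gtrsim\delta^{3/2}$. This comes from the linear part, whose normal at $(\pm\delta,\phi)$ is $\propto(\pm\sqrt\delta,\sqrt\phi)$. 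What matters is this cushion measured against the switch distance $2\delta$, not a power of the normal angle.

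With that bound, no exponentially decaying schedule is needed. Choose $\x_t$ by continuity with $f_t(\x_t)=\epsilon=\min\{1/(2L),c\,\delta^{3/2}/4\}$ for an absolute constant $c>0$. Intervals with $n\le 2L$ then have regret at most $n\epsilon\le1$. Longer ones have $\min_{\z}\sum_{t\in I}f_t(\z)\ge\frac n4 c\,\delta^{3/2}$, so their regret is nonpositive. Finally, $\delta=(\tau/T)^{2/5}$ gives $T\epsilon\asymp\tau^{3/5}T^{2/5}$.
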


\subsection{Key analyses}
\label{sec:geometry-key-analyses}

Fix a time horizon $T$, continuous convex losses $\lbr{f_t}_{t=1}^T$ on a compact convex domain $\XX$, and a comparator path $\lbr{\u_t}_{t=1}^T$.
Let the order function $\rho:[T]\to\reals_+$ be nondecreasing and the coefficient $A_T\ge1$.
We consider the OIR guarantees on all intervals $I\subseteq[T]$ and all comparators $\z_I\in\XX$:
\begin{equation}
  \label{eq:oir-guarantees}
  \ireg_I(\z_I)\le A_T \cdot \rho(|I|),
\end{equation}
with $d$ and $\log T$ factors absorbed into $A_T$.
The optimal interval scales are $\rho(n)=\sqrt n$ for convex losses and $\rho(n)=1$ for exp-concave and strongly convex losses.
For an interval comparator $\z_I\in\XX$ on each interval, define the interval-to-dynamic approximation error by
\begin{equation}
  \label{eq:main-i2d-gap}
  \gap_I(\z_I) \triangleq \sum_{t\in I} f_t(\z_I) - \sum_{t\in I} f_t(\u_t).
\end{equation}
Assign nonnegative weights $\lambda_I$ to intervals so that each round has total weight one.
Aggregating the interval guarantees gives
\begin{equation}
  \label{eq:main-cover-costs}
  \begin{gathered}
    \dreg_T = \sum_I \lambda_I \sbr{\ireg_I(\z_I) + \gap_I(\z_I)} \le A_T R_{T,\rho}+\gap_T, \\
    R_{T,\rho} \triangleq \sum_I \lambda_I \rho(|I|), \qquad
    \gap_T \triangleq \sum_I \lambda_I \gap_I(\z_I).
  \end{gathered}
\end{equation}
We call $R_{T,\rho}$ the \emph{interval complexity} and $\gap_T$ the total \emph{approximation error}.
When the loss class is clear, we write $R_T$ for $R_{T,\rho}$.
Such a weighting is called a \emph{homogeneous aggregation}; formally, the feasible weights form
\begin{equation}
  \label{eq:main-exact-cover}
  \Lambda_T=\lbr{
    \lbr{\lambda_I}_{I=[a,b]\subseteq [T]} : ~~ \lambda_I\ge0, ~~ \sum_{I:t\in I}\lambda_I=1\quad(t\in[T])
  }.
\end{equation}
By definition, $\sum_I\lambda_I|I|=T$.
The following duality shows that such aggregation characterizes the worst possible dynamic regret deduced from the OIR guarantees.
Thus, a lower bound built on homogeneous aggregation represents every proof using only this information.

\begin{thm}[homogeneous aggregation represents all proofs]
  \label{thm:cover-duality}
  With preceding notations, let
  \begin{gather*}
    m_t = \min_{\z\in\XX} f_t(\z), ~~
    M_t = \max_{\z\in\XX} f_t(\z), ~~
    b_I = \min_{\z\in\XX} \sum_{t\in I} f_t(\z), ~~
    c_I =
    \begin{cases}
      b_I + A_T \rho(|I|), &|I|>1, \\
      \min\{ m_t+A_T\rho(1), M_t \}, &I=\{t\}.
    \end{cases}
  \end{gather*}
  Then both extrema in the following equality are attained:
  \begin{equation}
    \label{eq:main-cover-duality}
    \max_{\substack{
        \lbr{\x_t}_{t=1}^T\in\XX^T \text{ satisfying \cref{eq:oir-guarantees}}
    }} \dreg_T(\lbr{\u_t}_{t=1}^T)
    = \min_{\blambda\in\Lambda_T} \sum_I \lambda_I c_I - \sum_{t=1}^T f_t(\u_t).
  \end{equation}
\end{thm}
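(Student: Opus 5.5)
The plan is to turn the left-hand side of \cref{eq:main-cover-duality} into a finite linear program in the loss values $v_t\triangleq f_t(\x_t)$, and then read off the right-hand side as its LP dual. The comparator term $\sum_t f_t(\u_t)$ is a constant, so it suffices to show that $\max\sum_t f_t(\x_t)$ over OIR-feasible decisions equals $\min_{\blambda\in\Lambda_T}\sum_I\lambda_I c_I$.

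\emph{Step 1 (reduction to values).} Since $\XX$ is compact, convex (hence connected) and $f_t$ is continuous, the set $\{f_t(\x):\x\in\XX\}$ is exactly $[m_t,M_t]$ by the intermediate value theorem. So any $v_t\in[m_t,M_t]$ is realized by some $\x_t\in\XX$, and every $\x_t$ gives such a value. For a fixed interval $I$, the OIR condition \cref{eq:oir-guarantees} holds for all $\z_I\in\XX$ iff $\sum_{t\in I}v_t\le \min_{\z}\sum_{t\in I}f_t(\z)+A_T\rho(|I|)=b_I+A_T\rho(|I|)$. For singletons $b_{\{t\}}=m_t$, and combining with $v_t\le M_t$ gives $v_t\le c_{\{t\}}$. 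Hence the primal becomes the LP
\[
  \max\ \sum_t v_t \quad\text{s.t.}\quad \sum_{t\in I}v_t\le c_I\ \ (\forall I),\qquad v_t\ge m_t\ \ (\forall t).
\]
This LP is feasible: $v=m$ works, because $b_I\ge\sum_{t\in I}m_t$ and $c_{\{t\}}\ge m_t$. It is bounded above by $\sum_t M_t$. So by finite-dimensional LP theory both the primal and dual optima are attained and are equal. Attainment on the decision side follows by realizing the optimal $v$ through Step 1.

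\emph{Step 2 (the dual).} With multipliers $\lambda_I\ge0$ for the interval constraints and $\mu_t\ge0$ for the lower bounds, the dual is
\[
  \min\ \sum_I\lambda_I c_I-\sum_t\mu_t m_t \quad\text{s.t.}\quad \sum_{I\ni t}\lambda_I-\mu_t=1 .
\]
Restricting to $\mu=0$ gives exactly $\Lambda_T$. This immediately yields weak duality in the direction "$\le$" of \cref{eq:main-cover-duality} (equivalently, \cref{eq:main-cover-costs} with $\z_I$ chosen as minimizers of $\sum_{t\in I}f_t$).

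\emph{Step 3 (main obstacle: eliminating $\mu$).} It remains to show that the dual optimum can be taken with $\mu=0$, i.e.\ exact covers suffice. Equivalently, in the primal the constraints $v_t\ge m_t$ must be redundant at some optimum. I would try to argue this in the primal. Take an optimal solution $v^\star$ of the LP \emph{without} lower bounds; it is bounded because the singleton constraints cap each $v_t$ and $\blambda$ with all singletons weighted one is dual-feasible. Then show that it can be modified into one with $v_t\ge m_t$ without losing value.

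The idea is as follows. If $v^\star_t<m_t$, then raising $v_t$ is blocked only by tight intervals $I\ni t$ with $|I|>1$. On such an $I$ we have $\sum_{s\in I}v^\star_s=b_I+A_T\rho(|I|)\ge\sum_{s\in I}m_s+A_T\rho(|I|)$, so some other $s\in I$ has $v^\star_s>m_s$. I would shift mass from such $s$ to $t$. This keeps the sums on intervals containing both $s$ and $t$ unchanged, and for intervals containing only $t$ I would use the tightness structure (an uncrossing argument on tight intervals, which form a laminar-after-uncrossing family for interval constraint matrices) to show that no constraint is violated.

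If this exchange argument gets messy, the fallback is to work in the dual. Given a feasible $(\blambda,\mu)$ with $\mu_t>0$, I would remove the excess coverage at $t$ by shrinking or splitting covering intervals. The interval constraint matrix is totally unimodular (consecutive-ones), so optimal dual vertices are integral, which reduces the task to a combinatorial exchange on integral interval covers.
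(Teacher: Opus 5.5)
Your Steps 1 and 2 are correct and match the paper. The intermediate value theorem turns the left-hand side into the LP $\max\sum_t v_t$ subject to $\sum_{t\in I}v_t\le c_I$ and $v_t\ge m_t$. Strong duality applies, and restricting the dual to $\mu=\bm0$ gives the easy inequality.

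The gap is Step 3. It is the whole nontrivial content of the theorem, and you leave it as a plan. Your primal exchange needs tight intervals to uncross into a laminar family. That requires $c_I+c_J\ge c_{I\cup J}+c_{I\cap J}$ for crossing $I,J$, which fails here because $b_I$ can be strictly supermodular. Take $\XX=[0,1]$, $f_1(z)=Kz^2$, $f_2\equiv0$, $f_3(z)=K(z-1)^2$ and $\rho\equiv1$. Then $c_{[1,2]}+c_{[2,3]}=2A_T$, while $c_{[1,3]}+c_{\{2\}}=K/2+A_T$. So $v(I)+v(J)=v(I\cup J)+v(I\cap J)$ no longer forces the union and intersection to be tight. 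If $t$ lies in tight intervals $[a,t]$ and $[t,b]$ with $a<t<b$, every single shift $s\to t$ breaks one of them, and you give no multi-step replacement.

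The dual fallback points the right way, but ``shrinking or splitting'' is not yet an argument. Splitting an interval $[a,b]$ at an interior round $t$ can even raise the cost. It needs $\rho(b-a+1)\ge\rho(t-a)+\rho(b-t)$, which is false for $\rho(n)=\sqrt n$ and for $\rho\equiv1$. Total unimodularity does not help either: integrality of vertices says nothing about which exchange lowers the cost.

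Two ingredients are missing, and together they are exactly the paper's proof.

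\textbf{Left-trimming inequality.} For $t<s$ one has $c_{[t,s]}\ge m_t+c_{[t+1,s]}$. This follows from $b_{[t,s]}\ge m_t+b_{[t+1,s]}$, $\rho$ nondecreasing, and $c_{\{t+1\}}\le b_{\{t+1\}}+A_T\rho(1)$. Together with $c_{\{t\}}\ge m_t$, it says that deleting round $t$ from the left end of an interval carrying weight $\eps$ lowers $\sum_I\lambda_Ic_I$ by at least $\eps m_t$. That exactly offsets the increase $\eps m_t$ of $-\sum_t\mu_tm_t$ when $\mu_t$ drops by $\eps$.

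\textbf{Ordering argument.} You need an endpoint at the excess round to trim. Take the \emph{first} round $t$ with coverage $w_t=\sum_{I\ni t}\lambda_I>1$. Since $w_{t-1}=1$ (or $t=1$), the intervals covering $t$ that start earlier have total weight at most $1$, so some $[t,s]$ has positive weight. Move $\eps=\min\{\lambda_{[t,s]},w_t-1\}$ from $[t,s]$ to $[t+1,s]$, or delete it if $s=t$, and lower $\mu_t$ by $\eps$. This preserves feasibility and leaves every other round's coverage unchanged. The objective changes by $\eps(c_{[t+1,s]}-c_{[t,s]}+m_t)\le0$, and sweeping left to right reaches $\mu=\bm0$.

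The order really matters. With weight $2$ on $[1,3]$, round $t=2$ has $\mu_2>0$ but is an endpoint of no positive-weight interval, so an arbitrary excess round cannot simply be trimmed.
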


See \cref{sec:cert-lp-completeness} for details.
Indeed, the OIR guarantees impose linear inequalities on the scalar learner losses $q_t=f_t(\x_t)$.
Linear programming duality then gives \cref{thm:cover-duality}.
The singleton correction in $c_I$ enforces the attainable loss range, so the LP solution $\lbr{q^\star_t}_{t=1}^T$ is attainable by decisions in $\XX$.
In the constructions of \cref{thm:geometry-separation}, $A_T=1$, $\rho(1)=1$, $m_t=0$, and $M_t\ge1$, so this correction vanishes.
For $A_T\ge1$, we still have $b_I+A_T\rho(|I|)-c_I \le (A_T-1)\rho(|I|)$.
Thus
\begin{equation*}
  \sum_I \lambda_I c_I - \sum_{t=1}^T f_t(\u_t) \ge A_T R_{T,\rho} + \gap_T - (A_T-1) R_{T,\rho} = R_{T,\rho} + \gap_T.
\end{equation*}
The same lower bounds apply to the right-hand side of \cref{eq:main-cover-duality}, so the counterexamples still hold.

\paragraph{The dilemma between interval complexity and approximation error.}
The constructions in \cref{thm:geometry-separation} force a tradeoff between interval complexity and approximation error.
Short intervals incur large interval complexity, while long intervals entail large approximation error.
Here we decompose the horizon into $B$ blocks of length $L=T/B$.
For linear losses on the $\ell_2$ ball, the dilemma is
\begin{equation*}
  R_T = \Omega\sbr{\sqrt{TB}} \quad \text{or} \quad
  \gap_T = \Omega\sbr{T(\tau/B)^2}.
\end{equation*}
Choosing $B=\Theta(\tau^{4/5}T^{1/5})$ gives the lower bound $\Omega(\tau^{2/5}T^{3/5})$.
For exp-concave and strongly convex losses on the $\ell_{3/2}$ ball, the dilemma is
\begin{equation*}
  R_T = \Omega(B) \quad \text{or} \quad
  \gap_T = \Omega\sbr{T(\tau/B)^{\frac{3}{2}}}.
\end{equation*}
Choosing $B=\Theta(\tau^{3/5}T^{2/5})$ gives the lower bound $\Omega(\tau^{3/5}T^{2/5})$.

\subsection{Extending Luo--Schapire arguments to polytopes and their smoothings}
\label{sec:main-polyhedral-extensions}

On the other hand, polyhedral geometry allows OIR to imply ODR for linear losses.
We extend the Luo--Schapire argument from the simplex to polytopes and their sufficiently small Euclidean smoothings \citep{COLT15:Luo-AdaNormalHedge}.
These results give an interpretation of the geometric separation:
on the smooth domains in \cref{thm:geometry-separation}, replacing moving comparators $\u_t$ with per-interval static comparators $\z_I$ incurs an unavoidable dilemma between interval complexity and approximation error, which polyhedral representations eliminate for linear losses.
Detailed discussions appear in \cref{sec:cert-luo-schapire}.

The key is to represent each comparator on a polytope $\QQ$ by probability weights on its vertices, with a Lipschitz coefficient $G_\QQ$ controlling how these weights vary along the comparator path.
\Cref{thm:cert-polyhedral-reduction} shows that the OIR guarantees in \cref{eq:oir-guarantees} with $\rho(|I|)=\sqrt{|I|}$ imply
\begin{equation*}
  R_T\le\sqrt{T\sbr{1+\frac{G_\QQ}{2}P_T}}, \qquad
  \gap_T=0, \qquad
  \dreg_T\le A_T\sqrt{T\sbr{1+\frac{G_\QQ}{2}P_T}}.
\end{equation*}
For the smoothed domain $\XX_\eps=\QQ+\eps\BB$, where $\BB$ is the Euclidean unit ball, \cref{thm:cert-smoothed-polygon} shows that the same OIR guarantees imply
\begin{equation*}
  R_T\le\sqrt{T\sbr{1+\frac{G_\QQ}{2}P_T}}, \qquad
  \gap_T\le\eps T, \qquad
  \dreg_T\le A_T\sqrt{T\sbr{1+\frac{G_\QQ}{2}P_T}}+\eps T.
\end{equation*}
The constant $G_\QQ$ depends only on the geometry of $\QQ$ and is $\OO(1)$ for standard simplices and $\ell_\infty$ balls in fixed dimension, but can become large enough to render the dynamic-regret bound vacuous for ill-conditioned polytopes such as regular $T$-gons, consistent with our geometric insight that polytopes approaching smooth convex bodies can inherit the obstruction to the OIR-to-ODR implication.

\begin{algorithm}[!t]
  \caption{OIR-algorithm-to-ODR-algorithm wrapper}
  \label{alg:main-reduction}
  \begin{algorithmic}
    \REQUIRE Domain and losses satisfying the applicable part of \cref{asm:domain-gradient,asm:curvature}, parameters from \cref{tab:loss-parameters}, and one algorithm $\AA$ with OIR on $\YY=\BB(D_\XX)$ as in \cref{thm:algorithm-reduction}.
    \ENSURE Decisions $\lbr{\x_t}_{t=1}^T \in \XX^T$.
    \STATE Initialize $\AA$ on $\YY$ with parameters in the right column of \cref{tab:loss-parameters}.
    \FOR{$t = 1, \ldots, T$}
    \STATE Obtain $\y_t \in \YY$ from $\AA$, play
    $\x_t = \Pi_\XX[\y_t]$, and observe $\g_t = \gr f_t(\x_t)$.
    \STATE Set $\n_t = \y_t - \x_t$ and $\d_t = \g_t + \dfrac{\max\{ -\g_t^\trs \n_t, 0 \}}{\norme{\n_t}^2} \n_t$, with the convention $\d_t = \g_t$ for $\n_t = \bm0$.
    \STATE Feed $\AA$ the full loss $h_t$ defined in \cref{eq:surrogate-convex,eq:surrogate-exp-concave,eq:surrogate-strongly-convex} for the selected loss class.
    \ENDFOR
  \end{algorithmic}
\end{algorithm}

\begin{table}[th]
  \centering
  \caption{Domain and loss parameters before and after domain conversion.}
  \label{tab:loss-parameters}
  \begin{tabular}{lll}
    \toprule
    Parameter & $f_t$ on $\XX$ & $h_t$ on $\YY$ \\
    \midrule
    Enclosing-ball diameter & $D_\XX$ & $D_\YY = 2D_\XX$ \\
    Lipschitz coefficient & $G$ &
    $G_h =
    \begin{cases}
      G, &\text{convex},\\
      (1 + \gamma D_\YY G)G, &\text{exp-concave},\\
      G + \lambda(D_\XX+D_\YY)/2, &\text{strongly convex}.
    \end{cases}$ \\
    Exp-concavity coefficient & $\alpha$ &
    $\alpha_h = \gamma/(1 + \gamma D_\YY G)^2$ \\
    Exp-concave curvature & $\gamma$ in \cref{eq:main-gamma} &
    $\gamma_h = \alpha_h/2$ \\
    Strong-convexity coefficient & $\lambda$ & $\lambda_h = \lambda$ \\
    Smoothness coefficient & N/A &
    $H_h = 2G_h/D_\XX$, \emph{only in hindsight analyses}. \\
    \bottomrule
  \end{tabular}
\end{table}

\section{Positive result: leveraging local adaptivity to achieve ODR}
\label{sec:main-reduction}

Although \cref{sec:main-geometry} refutes a general metric-level implication from OIR to ODR, the algorithmic ability to minimize regret locally in time can nonetheless be used to globally track changing comparators.
In this section, we show how to correctly leverage the local adaptivity of an OIR algorithm to achieve a (different) algorithm with ODR.
The key ingredients are domain conversion and the construction of corresponding surrogate losses.
This goes beyond directly applying an OIR algorithm to the original problem, whose interval-regret guarantee alone is insufficient, as shown in \cref{sec:main-geometry}.

In \cref{alg:main-reduction}, we propose a wrapper that takes an OIR algorithm for surrogate losses and produces an ODR algorithm for the original losses.
The loss class and parameters in \cref{tab:loss-parameters} are supplied at initialization.
The horizon $T$ may be known, but the path length $P_T$ is always unknown.

\subsection{Main result}
\label{sec:reduction-main-result}

\begin{asm}[bounded domain and bounded gradient]
  \label{asm:domain-gradient}
  The domain is convex and compact, and satisfies $\XX \subseteq \BB(D_\XX/2)$ for a fixed $D_\XX>0$.
  For a fixed $G>0$, each loss $f_t$ is differentiable and convex on $\XX$, with $\norme{\gr f_t(\x)}\le G$ for every $\x\in\XX$.
\end{asm}

\begin{asm}[curvature, when applicable]
  \label{asm:curvature}
  In the exp-concave regime, $\x \mapsto \exp(-\alpha f_t(\x))$ is concave on $\XX$ for a fixed $\alpha>0$.
  In the strongly convex regime, every pair $(\x,\u)\in\XX^2$ satisfies $f_t(\u)\ge f_t(\x)+\gr f_t(\x)^\trs(\u-\x)+\frac{\lambda}{2}\norme{\u-\x}^2$ for a fixed $\lambda>0$.
\end{asm}

Let $\y_t$ be the decision of $\AA$, and let $\x_t$ and $\d_t$ be its projection and the corrected gradient defined in \cref{alg:main-reduction}. The wrapper feeds $\AA$ the following surrogate loss, with $\gamma$ specified in \cref{eq:main-gamma}:
\begin{align}
  \label{eq:surrogate-convex}
  h_t(\y) &= \d_t^\trs \y
  && \text{(convex)}, \\
  \label{eq:surrogate-exp-concave}
  h_t(\y) &= \d_t^\trs \y + \frac{\gamma}{2} [\d_t^\trs (\y - \y_t)]^2
  && \text{(exp-concave)}, \\
  \label{eq:surrogate-strongly-convex}
  h_t(\y) &= \d_t^\trs \y + \frac{\lambda}{2} \norme{\y - \x_t}^2
  && \text{(strongly convex)}.
\end{align}

\begin{thm}[algorithmic OIR-to-ODR reduction]
  \label{thm:algorithm-reduction}
  Suppose \cref{asm:domain-gradient,asm:curvature} hold for the selected loss class.
  Suppose an OIR algorithm $\AA$ guarantees interval regret of order $\Ot(\sqrt{|I|})$, $\Ot(d\log|I|)$, or $\Ot(\log|I|)$ for convex, exp-concave, or strongly convex losses, respectively.
  Then the wrapper in \cref{alg:main-reduction}, using the OIR algorithm $\AA$ over the surrogate losses $\{h_t\}_{t=1}^T$ and the enlarged domain $\YY$, produces decisions $\x_t \in \XX$ and achieves the following ODR guarantees:
  \begin{align}
    \label{eq:dynamic-regret-bounds-convex}
    \sum_{t=1}^T f_t(\x_t) - \sum_{t=1}^T f_t(\u_t) &\leq \Ot(\sqrt{T(1+P_T)})
    && \text{(convex)}, \\
    \label{eq:dynamic-regret-bounds-exp-concave}
    \sum_{t=1}^T f_t(\x_t) - \sum_{t=1}^T f_t(\u_t) &\leq \Ot\big(d+d^{\frac{2}{3}}P_T^{\frac{2}{3}}T^{\frac{1}{3}}\big)
    && \text{(exp-concave)}, \\
    \label{eq:dynamic-regret-bounds-strongly-convex}
    \sum_{t=1}^T f_t(\x_t) - \sum_{t=1}^T f_t(\u_t) &\leq \Ot\big(1+P_T^{\frac{2}{3}}T^{\frac{1}{3}}\big)
    && \text{(strongly convex)}.
  \end{align}
  The dynamic regret bounds hold for any comparator sequence $\u_1,\u_2,\ldots,\u_T \in \XX$.
\end{thm}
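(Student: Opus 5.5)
The plan has two stages. First, a domain-conversion inequality transfers the original dynamic regret to a linear or curved surrogate dynamic regret on $\YY$. Second, a dynamic-regret bound for $\AA$ on $\YY$ is derived from its interval regret, and this is where the enlarged domain and the surrogate structure supply what the negative result in \cref{thm:geometry-separation} showed is missing.

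\textbf{Step 1: domain conversion.} For any $\u\in\XX$, the projection property gives $\n_t^\trs(\u-\x_t)\le0$. If $\g_t^\trs\n_t\ge0$, then $\d_t=\g_t$. Otherwise $\d_t=\g_t+c_t\n_t$ with $c_t\ge 0$, so $\d_t^\trs(\u-\x_t)\le \g_t^\trs(\u-\x_t)$. In both cases $\d_t^\trs\y_t=\d_t^\trs\x_t+\max\{\g_t^\trs\n_t,0\}\ge \d_t^\trs \x_t$ along the direction $\n_t$; more precisely $\d_t^\trs\n_t=\max\{\g_t^\trs\n_t,0\}\ge0$. Hence $\g_t^\trs(\x_t-\u_t)\le \d_t^\trs(\y_t-\u_t)$, and one checks $\norme{\d_t}\le G$, since $\d_t$ removes the negative normal component of $\g_t$. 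Combining this with convexity, exp-concavity (via the usual $\gamma$-quadratic lower bound from \cref{eq:main-gamma}, valid because $|\g_t^\trs(\x_t-\u_t)|\le GD_\XX$), or strong convexity, and using $\norme{\y_t-\u_t}\ge\norme{\x_t-\u_t}$ in the strongly convex case, I obtain
\[
f_t(\x_t)-f_t(\u_t)\le h_t(\y_t)-h_t(\u_t)
\]
for each surrogate in \cref{eq:surrogate-convex,eq:surrogate-exp-concave,eq:surrogate-strongly-convex}. The exp-concave case needs the cross term $[\g_t^\trs(\x_t-\u_t)]^2$ to dominate $[\d_t^\trs(\y_t-\u_t)]^2$, or else the surrogate must be designed so that only $\d_t$ appears. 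I expect to use $\g_t^\trs(\x_t-\u_t)\le\d_t^\trs(\y_t-\u_t)$ together with the monotonicity of $z\mapsto z-\frac{\gamma}{2}z^2$ on the relevant range. It therefore suffices to bound $\sum_t h_t(\y_t)-h_t(\u_t)$ for comparators in $\XX\subseteq\BB(D_\XX/2)$.

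\textbf{Step 2: from interval regret on $\YY$ to dynamic regret.} Partition $[T]$ into $B$ blocks $I_1,\dots,I_B$, chosen in the analysis only. On block $I_k$, fix a static comparator $\z_k$ built from the block's comparators and write
\[
\sum_{t\in I_k}\sbr{h_t(\y_t)-h_t(\u_t)}=\ireg_{I_k}(\z_k)+\sum_{t\in I_k}\sbr{h_t(\z_k)-h_t(\u_t)}.
\]
The enlarged ball is what makes this work. Since each $\u_t$ lies in $\BB(D_\XX/2)$, comparators of the form $\z_k=\u_{s}+(\text{correction})$ with correction norm up to $D_\XX/2$ remain in $\YY$, and this freedom is exactly what the negative examples lacked. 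In the convex case I take $\z_k$ so that the approximation error is bounded by $G\cdot|I_k|\cdot(\text{path length inside }I_k)$; the simplest choice is $\z_k=\u_{s_k}$ for the first round $s_k$ of the block. This gives $\sum_k\big(\Ot(\sqrt{|I_k|})+G|I_k|P_{I_k}\big)$, and it is too lossy. The better route uses a block partition adapted to the path, with each block's internal path length at most $D_\XX$-scale. The remaining terms are then handled by the linear telescoping $\sum_t\d_t^\trs(\z_k-\u_t)$, where the extra radius absorbs an averaged comparator correction. The target is $\sum_k\sqrt{|I_k|}\le\sqrt{BT}$ with $B\approx 1+P_T/D_\XX$, giving $\Ot(\sqrt{T(1+P_T)})$. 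For the curved cases, the smoothness $H_h=2G_h/D_\XX$ holds on $\YY$ in hindsight. This lets me bound $h_t(\z_k)-h_t(\u_t)\le \nabla h_t(\u_t)^\trs(\z_k-\u_t)+\frac{H_h}{2}\norme{\z_k-\u_t}^2$. Taking $\z_k$ as a gradient-corrected average (a one-step "prox" point of the block losses, kept inside $\YY$ by the radius slack) cancels the first-order term up to a quantity controlled by the block path length $P_k$. The error per block is then $\OO(|I_k|P_k^2)$, as in the Baby--Wang analysis but without KKT. The blocks are chosen greedily so that $|I_k|P_k^2\lesssim d$ (respectively $1$). A standard counting argument via Hölder then yields $B\lesssim 1+T^{1/3}P_T^{2/3}d^{-1/3}$. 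The total is $\Ot(dB)=\Ot(d+d^{2/3}T^{1/3}P_T^{2/3})$, and analogously $\Ot(1+T^{1/3}P_T^{2/3})$ in the strongly convex case.

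\textbf{Main obstacle.} The hard step is constructing the block comparator $\z_k\in\YY$ whose first-order error vanishes, and verifying that it stays in the ball of radius $D_\XX$. I would first try $\z_k=\bar\u_k-\eta\sum_{t\in I_k}\nabla h_t(\bar\u_k)$, where $\bar\u_k$ is the block average and the step $\eta$ is tied to the curvature. I would then bound $\norme{\z_k-\bar\u_k}\le D_\XX/2$ using the stopping rule of the greedy partition.
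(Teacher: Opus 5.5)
Your Step~1 is correct and coincides with the paper's instantaneous transfer. It uses the projection correction plus monotonicity of $a\mapsto a-\frac{\gamma}{2}a^2$ for $a\le 1/\gamma$. In the strongly convex case you do not need $\norme{\y_t-\u_t}\ge\norme{\x_t-\u_t}$, because the extra term $\frac{\lambda}{2}\norme{\y_t-\x_t}^2$ in $h_t(\y_t)$ is simply nonnegative. Your partition parameters and counting in Step~2 also match the paper.

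The gap is that you use a \emph{single} static comparator on each hindsight block. No single comparator, in $\YY$ or anywhere in $\reals^d$, can make the approximation error small. Take a block $I=[1,n]$ with $\d_t=\g$ for $t\le n/2$ and $\d_t=-\g$ afterwards. Let $\u_t=\u_a$ on the first half and $\u_b$ on the second, with $\g$ parallel to $\u_b-\u_a$. Then $\sum_{t\in I}\d_t^\trs\z=0$ for every $\z$, so $\gap_I(\z)=\frac{n}{2}\norme{\g}P_I$, which is linear in $|I|$ when $P_I$ is of order $D_\XX$. Abel summation explains why: $\sum_{t}\d_t^\trs(\u_n-\u_t)=\sum_{t<n}\D_t^\trs(\u_{t+1}-\u_t)$ is weighted by the prefix sums $\D_t$. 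These can be of order $nG$ even when the total $\D_n$, the only quantity a single comparator can act on, vanishes.

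The curved case fails the same way. Let $h_t$ be an isotropic quadratic, as the strongly convex surrogate is, with $\gr h_t(\u_t)=\pm\g$ in the same pattern. Then the minimizer of $\sum_{t\in I}h_t$ over all of $\reals^d$ still leaves error at least $\frac{n}{2}\norme{\g}P_I$, which is first order in $P_I$. So your claimed $\OO(|I_k|P_k^2)$ per block is false. Under $|I_k|P_k^2\lesssim R_T$ you only get $\OO(G\sqrt{|I_k|R_T})$ per block instead of $\OO(R_T)$. Your candidate $\bar{\u}_k-\eta\sum_{t\in I_k}\gr h_t(\bar{\u}_k)$ cannot beat the best single comparator.

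The missing idea is to split each hindsight block once more, guided by the gradients rather than the path.

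In the linear case take $m\in\argmax_{t<n}\norme{\D_t}$ and $M=\norme{\D_m}$. Use $\z_1=\u_n-\frac{P_I}{M}\D_m$ on $[1,m]$ and $\z_2=\u_n$ on $[m+1,n]$. The first piece contributes exactly $-MP_I$, which cancels the Abel bound $MP_I$. Moreover $\norme{\z_1}\le D_\XX/2+P_I\le D_\XX$ precisely because the partition keeps $P_I\le D_\XX/2$; this is where the enlarged ball is actually used.

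In the smooth case do the same with the prefix sums $\R_t$ of $\c_t-\bar{\c}$, where $\c_t=\u_t-H_h^{-1}\gr h_t(\u_t)\in\YY$, and take the mean center on each piece. The gain $\sum_s|I_s|\norme{\bdelta_s}^2\ge 4M^2/n$ then beats the cross term $2MP_{[n]}$ up to $nP_{[n]}^2/4$, giving $\gap_{[n]}\le H_hP_{[n]}^2n/8$.

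This split at most doubles the number of intervals, so the rest of your argument goes through. Baby--Wang manage with one comparator per block only by exploiting KKT structure of an offline-optimal path, which you explicitly set out to avoid.
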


The dependence on $T$ and $P_T$ is optimal up to logarithmic factors.
Established OIR algorithms satisfy the required interface \citep{ICML15:Daniely-adaptive,AISTATS17:coin-betting-adaptive,ICML18:zhang-dynamic-adaptive,NIPS21:dual-adaptive}.
As a byproduct, the reduction yields a proper and efficient algorithm with optimal dynamic regret for exp-concave losses on arbitrary compact convex domains, resolving a question left open by \citet{AISTATS22:sc-proper}.
Their analysis relies on a delicate coordinatewise treatment of KKT conditions specific to box domains; ours is substantially simpler even when $\XX$ is a box.
The mixability approach of \citet{ICML25:non-stationary-mixability} also provides proper guarantees beyond boxes, but faces nontrivial computational challenges.

\subsection{Key analyses}
\label{sec:reduction-key-analyses}

The following analysis presents the three key ingredients underlying the wrapper, with complete proofs in \cref{sec:algorithm-details}.

\subsubsection{Domain conversion safeguards the regret upper bound}
\label{sec:main-domain-conversion}

The geometric obstruction in \cref{thm:geometry-separation} motivates changing domains.
Specifically, by evaluating OIR on an enlarged domain, our method allows interval comparators $\z_I$ to lie outside $\XX$, while dynamic comparators $\u_t$ remain in $\XX$.
Consider a linear loss $f_t(\x) = \g_t^\trs \x$ and suppose the approximation error per round is $\eps$.
Simply pushing $\z_I$ to $\z_I'=\z_I-(\eps/\norme{\g_t}^2)\g_t$ reduces the approximation error $f_t(\z_I)-f_t(\u_t)$ by $\eps$ and reduces the approximation error to zero, which we call the improper learning benefit.

The problem is that, after such a push, $\z_I'$ may lie outside $\XX$, where the original loss $f_t$ is undefined.
Luckily, the established domain conversion techniques readily tackle this issue.
\citet{COLT18:black-box-reduction,ICML20:Ashok} first propose domain conversion for convex losses, and \citet{COLT19:Lipschitz-MetaGrad,COLT23:OQNS} extend the techniques to exp-concave losses.
The techniques show great value in improving the computational efficiency of online learning algorithms, exploiting a similar improper learning benefit.
Each projection is a genuine convex program with nontrivial computational cost;
bypassing projections improves efficiency but allows decisions to leave the domain \citep{NeurIPS22:efficient,NeurIPS24:universal-1-projection,colt26lightons}.
Of course, such benefits are not a free lunch; we refer readers to \citep{colt26lightons} for relevant discussions.

As described in \cref{alg:main-reduction}, the domain conversion first projects the lifted decision $\y_t$ onto $\XX$ to produce a proper decision $\x_t$;
then it corrects the gradient $\g_t$ to produce a surrogate gradient $\d_t$ to compensate for the extrapolation error, following the construction of \citet{ICML20:Ashok};
finally, it uses the surrogate gradient to construct closed-form surrogate losses $h_t$, such that:
\emph{(i)} the regret of $\y_t$ on $h_t$ upper bounds the regret of $\x_t$ on $f_t$, and \emph{(ii)} $h_t$ preserves the nice properties of $f_t$, including Lipschitzness, convexity, exp-concavity, and strong convexity.
The guarantee below allows regret transfer on every round, with its proof in \cref{sec:first-order-transfer}.

\begin{thm}[instantaneous regret transfer]
  \label{thm:pointwise-transfer}
  Suppose \cref{asm:domain-gradient,asm:curvature} hold for the selected loss class.
  At every round $t\in[T]$, construct $\x_t,\d_t,h_t$ from any lifted decision $\y_t\in\YY$ by \cref{alg:main-reduction}, using the parameters in \cref{tab:loss-parameters}.
  Then, for all three loss classes, every such pair $(\y_t,\x_t) \in \YY \times \XX$, and every comparator $\u_t\in\XX$,
  \begin{equation}
    \label{eq:main-transfer}
    f_t(\x_t) - f_t(\u_t) \le h_t(\y_t) - h_t(\u_t).
  \end{equation}
\end{thm}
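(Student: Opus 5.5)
The plan is to reduce all three cases to two elementary facts about the conversion step in \cref{alg:main-reduction}, and then combine these with the standard first-order lower bound for each loss class. Fix a round $t$ and a comparator $\u\in\XX$, write $\g=\g_t$, $\n=\n_t=\y_t-\x_t$, and let $c=\max\{-\g^\trs\n,0\}/\norme{\n}^2\ge0$, so that $\d_t=\g+c\,\n$; the case $\n=\bm0$ is trivial because then $\y_t=\x_t$ and $\d_t=\g$. The two facts are the following.

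\emph{(a) Gradient domination.} Since $\x_t=\Pi_\XX[\y_t]$, the projection optimality condition gives $\n^\trs(\u-\x_t)\le0$. Hence $\d_t^\trs(\x_t-\u)\ge\g^\trs(\x_t-\u)$. Moreover, $\d_t^\trs(\y_t-\x_t)=\g^\trs\n+c\norme{\n}^2=\max\{\g^\trs\n,0\}\ge0$. Adding these yields
\begin{equation*}
  \d_t^\trs(\y_t-\u)\ \ge\ \g^\trs(\x_t-\u).
\end{equation*}

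\emph{(b) Norm control.} When $c>0$, the vector $\d_t$ is $\g$ with its component along $\n$ removed, so $\norme{\d_t}\le\norme{\g}\le G$; when $c=0$, $\d_t=\g$. In addition, $\norme{\y_t-\u}\le D_\XX+D_\XX/2\le D_\YY$.

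\emph{Convex case.} Convexity of $f_t$ and (a) give $f_t(\x_t)-f_t(\u)\le\g^\trs(\x_t-\u)\le\d_t^\trs(\y_t-\u)=h_t(\y_t)-h_t(\u)$.

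\emph{Strongly convex case.} Strong convexity gives $f_t(\x_t)-f_t(\u)\le\g^\trs(\x_t-\u)-\frac\lambda2\norme{\u-\x_t}^2$. On the other side,
\begin{equation*}
  h_t(\y_t)-h_t(\u)=\d_t^\trs(\y_t-\u)+\frac\lambda2\norme{\y_t-\x_t}^2-\frac\lambda2\norme{\u-\x_t}^2 .
\end{equation*}
Dropping the nonnegative middle term and applying (a) gives the claim.

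\emph{Exp-concave case.} Put $a=\g^\trs(\x_t-\u)$ and $b=\d_t^\trs(\y_t-\u)$, so that $a\le b$ by (a). Expanding the surrogate gives $h_t(\y_t)-h_t(\u)=b-\frac\gamma2 b^2$. I would invoke the standard exp-concavity lemma (Hazan's), which with $\gamma$ as in \cref{eq:main-gamma} (presumably of the form $\frac12\min\{\alpha,1/(4GD_\XX)\}$ or smaller) yields $f_t(\x_t)-f_t(\u)\le a-\frac\gamma2a^2$. The function $\varphi(s)=s-\frac\gamma2s^2$ is nondecreasing on $(-\infty,1/\gamma]$. Since $a\le b$, it therefore suffices to show $b\le1/\gamma$. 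By (b), $b\le\norme{\d_t}\norme{\y_t-\u}\le GD_\YY$, so we need $\gamma GD_\YY\le1$.

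\emph{Main obstacle.} The step I expect to be delicate is this last one: checking that the choice of $\gamma$ in \cref{eq:main-gamma} simultaneously satisfies the hypothesis of the exp-concavity lemma on $\XX$ and the monotonicity requirement $\gamma GD_\YY\le1$ on the enlarged domain $\YY$. If $\gamma$ is defined relative to $D_\XX$ only, it may need shrinking by a constant factor, since $D_\YY=2D_\XX$. The norm bound $\norme{\d_t}\le G$ in (b) is the key ingredient that makes this work.
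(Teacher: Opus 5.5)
Your proposal is correct and follows the paper's proof essentially step by step. The steps match: the projection-correction facts (your (a) and (b), which are the paper's \cref{lem:projection-correction}), then convexity and strong convexity directly, and for exp-concave losses the Hazan-type bound on $\XX$ combined with the monotonicity of $s\mapsto s-\frac{\gamma}{2}s^2$ on $(-\infty,1/\gamma]$.

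The obstacle you flag does not arise. The paper sets $\gamma=\frac12\min\{\alpha,1/(D_\XX G)\}$, so $\gamma G D_\YY = 2\gamma G D_\XX \le 1$ already holds and no further shrinking is needed. The paper actually uses the slightly sharper bound $b\le (D_\XX+D_\YY)G/2$, but your cruder $b\le G D_\YY$ also suffices.
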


\paragraph{Effects on problem parameters.}
Domain conversion slightly worsens problem parameters, leading to additional constant factors independent of $d$, $T$, and $P_T$.
\Cref{alg:main-reduction} initializes an OIR algorithm on $\YY$ with parameters in the right column of \cref{tab:loss-parameters}.
To accommodate the improper interval comparators, the OIR algorithm runs on a ball with twice the radius of the ball enclosing $\XX$:
\begin{equation*}
  \YY = \BB(D_\YY/2) = \BB(D_\XX), \qquad \XX \subseteq \BB(D_\XX/2), \qquad D_\YY = 2 D_\XX.
\end{equation*}
For exp-concave losses, the left column distinguishes the exp-concavity coefficient $\alpha$ of $f_t$ from the coefficient $\gamma$ used in its quadratic lower bound in \cref{lem:exp-first-order}, which is a restatement of the same argument by \citet{journals/ml/HazanAK07}.
The wrapper sets
\begin{equation}
  \label{eq:main-gamma}
  \gamma
  = \frac{1}{2} \min\lbr{ \alpha, \frac{1}{D_\XX G}, \frac{4}{(D_\XX + D_\YY) G} }
  = \frac{1}{2} \min\lbr{ \alpha, \frac{1}{D_\XX G} }.
\end{equation}
\Cref{tab:loss-parameters} gives the resulting surrogate exp-concavity coefficient $\alpha_h = \gamma/(1 + \gamma D_\YY G)^2$ and quadratic lower-bound coefficient $\gamma_h = \alpha_h/2$.
For strongly convex losses, the conversion preserves the strong-convexity coefficient $\lambda_h = \lambda$.

\paragraph{Hindsight smoothness.}
No smoothness is assumed for the original losses.
In the curved regimes, the closed-form quadratic surrogates $h_t$, specified in \cref{eq:surrogate-exp-concave,eq:surrogate-strongly-convex}, are smooth by construction; we use this property only in the hindsight analysis developed in \cref{sec:main-hindsight}.

\subsubsection{Out-of-domain decision offsets approximation error}
\label{sec:main-hindsight}

Domain conversion lets us control the approximation error in \cref{eq:main-i2d-gap}, now evaluated on surrogate losses.
Focus on the simpler linear surrogate losses $h_t(\y) = \d_t^\trs \y$ and let the interval comparator be $\z_I = \u_n$ for now.
Reindex the interval as $I = [n]$.
Abel summation rewrites the approximation error with cumulative gradients and comparator differences:
\begin{equation*}
  \begin{aligned}
    \gap_I(\z_I) = \sum_{t=1}^n \d_t^\trs (\u_n-\u_t) = \sum_{t=1}^{n-1} \D_t^\trs (\u_{t+1}-\u_t) \le M P_I, \quad
    \D_t = \sum_{s=1}^t \d_s, \quad
    M = \max_{1 \le t < n} \norme{\D_t}.
  \end{aligned}
\end{equation*}
We seek an interval comparator whose loss supplies a compensating negative term of size $M P_I$.
Choose a split $m$ attaining $M$, and take
\begin{equation*}
  \norme{\D_m} = M, \qquad I_1 = [1, m], \qquad I_2 = [m+1, n], \qquad
  \z_1 = \u_n - \frac{P_I}{M} \D_m, \qquad \z_2 = \u_n.
\end{equation*}
The first comparator's contribution relative to $\u_n$ is exactly $-M P_I$, yielding nonpositive approximation error on this interval:
\begin{equation*}
  \gap_I(\z_1, \z_2) = \sum_{t=1}^n \d_t^\trs (\u_n - \u_t) - M P_I \le 0.
\end{equation*}
The quadratic surrogate losses can be handled similarly with smoothness, as the following theorem shows.
Proofs are in \cref{sec:first-order-i2d}.

\begin{thm}[improper interval comparators]
  \label{thm:improper-gap}
  Let $D_\XX > 0$, and set $\YY = \BB(D_\XX)$.
  For $t \in [n]$, let $\u_t \in \BB(D_\XX / 2)$, and write
  $P_{[n]} = \sum_{t=2}^n \norme{\u_t - \u_{t-1}}$.
  In each case below, there is a partition of $[n]$ into at most two consecutive pieces $I_s$ and an interval comparator $\z_s \in \YY$ on each piece, such that:
  \begin{enumerate}[label=(\roman*)]
    \item If every $h_t(\y) = \d_t^\trs \y$ is linear and $P_{[n]} \le D_\XX / 2$, then $\gap_{[n]} \le 0$.
    \item If every $h_t : \YY \to \reals$ is differentiable and $H_h$-smooth, and $\c_t = \u_t - H_h^{-1} \gr h_t(\u_t) \in \YY$, then $\gap_{[n]} \le H_h P_{[n]}^2 n / 8$.
  \end{enumerate}
\end{thm}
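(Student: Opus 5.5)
For part (i) I will follow the sketch given just before the statement and check feasibility. Write $\D_t=\sum_{s\le t}\d_s$ and take the comparator $\u_n$ as reference. Abel summation gives
\[
\sum_{t=1}^n \d_t^\trs(\u_n-\u_t)=\sum_{t=1}^{n-1}\D_t^\trs(\u_{t+1}-\u_t)\le M P_{[n]}, \qquad M=\max_{1\le t<n}\norme{\D_t}.
\]
If $M=0$ or $P_{[n]}=0$, the single piece $[n]$ with $\z=\u_n\in\BB(D_\XX/2)\subseteq\YY$ already gives $\gap_{[n]}\le0$. Otherwise I pick $m$ with $\norme{\D_m}=M$ and split into $I_1=[1,m]$ and $I_2=[m+1,n]$ (the latter possibly empty). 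On these pieces I take
\[
\z_1=\u_n-(P_{[n]}/M)\D_m, \qquad \z_2=\u_n .
\]
Then $\sum_{t\in I_1}\d_t^\trs(\z_1-\u_n)=-(P_{[n]}/M)\norme{\D_m}^2=-MP_{[n]}$, which cancels the Abel bound, so $\gap_{[n]}\le0$. For feasibility, $\norme{\z_1}\le D_\XX/2+P_{[n]}\le D_\XX$, using the hypothesis $P_{[n]}\le D_\XX/2$. So $\z_1\in\YY$.

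For part (ii) I will reduce to a linear-plus-quadratic comparison through smoothness. For each $t$,
\[
h_t(\z)-h_t(\u_t)\le \gr h_t(\u_t)^\trs(\z-\u_t)+\tfrac{H_h}{2}\norme{\z-\u_t}^2 .
\]
Completing the square gives
\[
h_t(\z)-h_t(\u_t)\le \tfrac{H_h}{2}\norme{\z-\c_t}^2-\tfrac{1}{2H_h}\norme{\gr h_t(\u_t)}^2 \le \tfrac{H_h}{2}\norme{\z-\c_t}^2 ,
\]
where $\c_t=\u_t-H_h^{-1}\gr h_t(\u_t)$ is exactly the point in the hypothesis. So it suffices to find at most two pieces with comparators $\z_s\in\YY$ such that $\sum_s\sum_{t\in I_s}\norme{\z_s-\c_t}^2\le P_{[n]}^2n/4$.

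The naive choice $\z=\c_t$ is not constant over an interval, and the $\c_t$ may move far more than the $\u_t$. That mismatch is the main obstacle. My first attempt keeps the expansion around $\u_t$, i.e., the linear term plus $\tfrac{H_h}{2}\norme{\z-\u_t}^2$, and mimics part (i). The linear term is $\sum_t\gr h_t(\u_t)^\trs(\z-\u_t)$. Writing it relative to $\z-\u_n$ plus Abel terms with $\D_t=\sum_{s\le t}\gr h_s(\u_s)$ lets a pushed comparator cancel the linear part as in (i). The quadratic part is then $\norme{\z-\u_t}^2\le(\norme{\z-\u_n}+\norme{\u_n-\u_t})^2$.

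The push $\z-\u_n$ need not be small, which is why the hypothesis $\c_t\in\YY$ must enter. The cleaner route is the completed-square bound above, with a convexity argument over the choice of $\z$. I would use the split $I_1=[1,m]$, $I_2=[m+1,n]$ with $m=\lceil n/2\rceil$, and take $\z_s$ to be a midpoint-type point of the $\u_t$ path on $I_s$, corrected by the gradient. The target is to show $\norme{\z_s-\u_t}\le P_{[n]}/2$ within each piece after cancelling the linear term, which yields $\tfrac{H_h}{2}\cdot\tfrac{P_{[n]}^2}{4}\cdot n=H_hP_{[n]}^2n/8$.

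If the gradient cancellation does not close, I would fall back on the single piece $\z=\u_{t^*}$ chosen at the path midpoint, so every $\norme{\u_t-\z}\le P_{[n]}/2$. The linear terms would then be controlled by optimizing over convex combinations of the $\c_t$, which lie in the convex set $\YY$, using the $-\tfrac{1}{2H_h}\norme{\gr h_t}^2$ slack to absorb them.
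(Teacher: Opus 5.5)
Your part (i) is correct and coincides with the paper's argument: Abel summation, a split at a maximizer of $\norme{\D_t}$, a push of norm exactly $P_{[n]}$, and feasibility from $P_{[n]}\le D_\XX/2$.

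Part (ii) has a genuine gap at the step ``so it suffices to find pieces with $\sum_s\sum_{t\in I_s}\norme{\z_s-\c_t}^2\le P_{[n]}^2n/4$.'' You discard $-\frac{1}{2H_h}\norme{\gr h_t(\u_t)}^2=-\frac{H_h}{2}\norme{\u_t-\c_t}^2$, and the target that remains is false in general. Take $\u_t\equiv\u$, so $P_{[n]}=0$, and $h_t(\y)=(-1)^t\g^\trs\y$ with $0<\norme{\g}\le H_hD_\XX/2$. The centers then alternate between two points at distance $2\norme{\g}/H_h$. For $n\ge3$ some piece contains two consecutive rounds, so your left side is at least $2\norme{\g}^2/H_h^2>0$, although the theorem holds trivially with $\z=\u$. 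The dropped term is exactly what has to pay for the spread of the centers.

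The concrete plans after that step also fail.
\begin{itemize}
\item No single piece can work, so your fallback is out. In one dimension, let $u_t=0$ then $u_t=P$ on the two halves, and $h_t(y)=\frac{H_h}{2}(y-c_t)^2$ with $c_t=-a$ then $c_t=P+a$. Every single comparator gives $\gap_{[n]}\ge\frac{H_h}{2}n(P^2/4+aP)$, which is linear in $P$.
\item A data-independent split at $\lceil n/2\rceil$ fails too. Take $n=4$, $(u_t)=(0,P,P,P)$, and quadratics centered at $(c_t)=(-a,a,0,0)$. Even the best comparators on $\{1,2\},\{3,4\}$ give $\gap_{[4]}=\frac{H_h}{2}(2aP-3P^2)$. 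This exceeds the claimed $\frac{H_h}{2}P^2$ once $a>2P$ (e.g.\ $D_\XX=2$, $a=1$, $P=1/10$).
\item You cannot keep $\norme{\z_s-\u_t}\le P_{[n]}/2$ while ``cancelling the linear term'' in general. Cancellation requires pushing the comparator away from the path, as in (i).
\end{itemize}

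The missing idea is the paper's route. Keep $-\norme{\u_t-\c_t}^2$, and let $\z_s$ be the mean of the centers on $I_s$; it lies in $\YY$ by convexity. Expand both sums around $\bar\c=\frac1n\sum_t\c_t$ with $\r_t=\c_t-\bar\c$. This gives
$\sbr{\frac{H_h}{2}}^{-1}\gap_{[n]}\le-\sum_s|I_s|\norme{\bdelta_s}^2-\sum_t\norme{\u_t-\bar\c}^2+2\sum_t\r_t^\trs\u_t$.

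The split must then be adaptive. With $\R_t=\sum_{s\le t}\r_s$, choose $m\in\argmax_{t<n}\norme{\R_t}$ and set $M=\norme{\R_m}$.
\begin{itemize}
\item The between-piece energy is $\frac{n}{m(n-m)}M^2\ge\frac4nM^2$.
\item Abel summation gives $\sum_t\r_t^\trs\u_t\le MP_{[n]}$.
\item Completing the square gives $-\frac4nM^2+2MP_{[n]}\le\frac n4P_{[n]}^2$, which is the claim.
\end{itemize}

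Incidentally, the first attempt you abandoned does close, because the part-(i) push has norm exactly $P_{[n]}$, not something large. Applying (i) to the fixed vectors $\gr h_t(\u_t)$ kills the linear term and leaves $\gap_{[n]}\le 2H_hP_{[n]}^2n$. However, it needs $P_{[n]}\le D_\XX/2$ instead of $\c_t\in\YY$, and it loses a factor of $16$. So it does not prove (ii) as stated.
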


\subsubsection{Baby--Wang hindsight partition aggregates ODR}
\label{sec:main-aggregation}

Since \cref{sec:main-hindsight} controls the approximation error in \cref{eq:main-i2d-gap}, it remains to bound the interval complexity in \cref{eq:main-cover-costs}.
Following the hindsight partition principle of \citet{COLT21:baby-exp-concave,AISTATS22:sc-proper}, a greedy partition closes an interval just before its local path-length exceeds $B / |I|^\veps$.
For $B>0$ and $\veps\ge0$, \cref{lem:first-order-partition} gives
\begin{equation*}
  P_{I_j} \le \frac{B}{|I_j|^\veps}, \qquad
  K \le 1 + \sbr{\frac{P_T}{B}}^{\frac{1}{1+\veps}} (2 T)^{\frac{\veps}{1+\veps}}.
\end{equation*}
Split each $I_j$ into the at most two pieces supplied by \cref{thm:improper-gap}, apply \cref{thm:pointwise-transfer}, and decompose the surrogate dynamic regret as in \cref{eq:main-cover-costs}:
\begin{equation}
  \label{eq:main-regret-decomposition}
  \dreg_T(\lbr{\u_t}_{t=1}^T)
  \le \sum_{j=1}^K \sum_{s=1}^{S_j} \sbr{ \ireg_{I_{j,s}}(\z_{j,s}) + \gap_{I_{j,s}}(\z_{j,s}) }.
\end{equation}
For convex losses, set $B = D_\XX / 2$ and $\veps = 0$.
The total approximation error on each $I_j$ is nonpositive by \cref{thm:improper-gap}, and Cauchy--Schwarz sums the interval bounds to
\begin{equation*}
  K = \OO\sbr{ 1 + P_T }, \qquad
  \dreg_T \le \sum_{j=1}^K \sum_{s=1}^{S_j} \Ot\sbr{ \sqrt{|I_{j,s}|} } = \Ot\sbr{ \sqrt{K T} }.
\end{equation*}
For curved losses, let $R_T\ge1$ be their interval-regret bound $\ireg_{I_{j,s}}(\z_{j,s}) \le R_T$.
Set $B = \sqrt{R_T}$ and $\veps = 1/2$.
Then the total approximation error on each $I_j$ is at most $H_h R_T / 8$ by \cref{thm:improper-gap}, and
\begin{equation*}
  K = \OO\sbr{ 1 + R_T^{-\frac{1}{3}} P_T^{\frac{2}{3}} T^{\frac{1}{3}} }, \qquad
  \dreg_T \le \sbr{2+\frac{H_h}{8}} K R_T = \OO\sbr{ R_T + R_T^{\frac{2}{3}} P_T^{\frac{2}{3}} T^{\frac{1}{3}} }.
\end{equation*}
Substituting $R_T = \Ot(d\log|I|)$ or $\Ot(\log|I|)$ proves the two curved rates.

\section{Conclusion}
\label{sec:main-discussion}

This work revisits the prevailing intuition that interval regret is a stronger performance measure than dynamic regret.
We show that the metric-level implication from OIR to ODR depends on domain geometry:
it holds for linear losses on polytopes and their sufficiently small Euclidean smoothings, with geometry-dependent guarantees, but fails on certain smooth domains.
Nevertheless, we show how the algorithmic ability to achieve OIR, combined with domain conversion, yields ODR.
This reduction explains how local-in-time regret minimization can support globally tracking changing comparators, clarifying the connection between two lines of research that have largely developed separately.
As a byproduct, it yields proper exp-concave learning with optimal dynamic regret beyond box domains with an efficient implementation.

The proposed wrapper also gives optimal dynamic regret on every interval; see \cref{sec:interval-dynamic-proof}.
Adapting the construction to higher-order comparator variation \citep{AISTATS23:second-order-path} and to unknown curvature \citep{NIPS16:MetaGrad,NeurIPS24:universal-1-projection} remains an interesting and promising direction.

\clearpage

\section*{AI-use Statement}
\label{sec:ai-methodology}
The authors developed the core one-dimensional reduction argument from interval regret to dynamic regret in March 2026, and used Gemini 3 Pro to help extend it to higher dimensions.
The authors developed the linear-loss lower-bound results in July 2026 with the assistance of GPT-5.6 Pro. The authors used GPT-6 via Codex to help check and polish the manuscript.
The authors take full responsibility for the content and correctness of the paper.

\printbibliography[heading=bibintoc]

\clearpage

\appendix
\crefalias{section}{appendix}
\crefalias{subsection}{appendix}
\crefalias{subsubsection}{appendix}

\section{Exact-cover duality, geometric constructions, and linear-loss extensions}
\label{sec:cert-dichotomy}

This appendix gives the complete arguments for \cref{sec:main-geometry}: homogeneous aggregation characterizes the metric-level implication from OIR to ODR, geometric constructions refute this implication in general, and polyhedral representations recover it for linear losses under suitable geometric conditions.
All regret guarantees here concern the original loss sequence.
In this appendix, we also use \emph{exact cover} as a synonym for \emph{homogeneous aggregation}, as defined in \cref{eq:main-exact-cover}.

\subsection{Proof of \texorpdfstring{\cref{thm:cover-duality}}{Theorem~\ref{thm:cover-duality}}}
\label{sec:cert-lp-completeness}

\begin{prf}[Proof of \cref{thm:cover-duality}]
  Recall $b_I$ and $c_I$ in \cref{thm:cover-duality}, and $\rho$ in \cref{sec:geometry-key-analyses}.
  Note the monotonicity of the interval cost $c_{[t,s]}$ in the sense that $c_{[t,s]} \ge m_t + c_{[t+1,s]}$ for all $1 \le t < s \le T$.
  A singleton satisfies $c_{[t,t]} \ge m_t$.
  For $s > t$, the inequality $b_{[t,s]} \ge m_t + b_{[t+1,s]}$ and the monotonicity of $\rho$ give
  \begin{equation*}
    \begin{aligned}
      c_{[t,s]}
      = b_{[t,s]} + A_T \rho(s - t + 1)
      \ge m_t + b_{[t+1,s]} + A_T \rho(s - t)
      \ge m_t + c_{[t+1,s]}.
    \end{aligned}
  \end{equation*}

  Define the following linear program (LP) for the learner losses $q_t = f_t(\x_t)$:
  \begin{equation*}
    \begin{aligned}
      U^\star = \max_{\q \in \reals^T} \quad
      &\sum_{t=1}^T q_t - \sum_{t=1}^T f_t(\u_t)\\
      \text{subject to} \quad
      &\sum_{t \in I} q_t \le c_I &&(I = [a,b] \subseteq [T]),\\
      &q_t \ge m_t &&(t \in [T]).
    \end{aligned}
  \end{equation*}
  A decision sequence $\lbr{\x_t}_{t=1}^T \in \XX^T$ enters the interval inequalities and dynamic regret only through the scalar learner losses $q_t = f_t(\x_t)$.
  Each $f_t$ is continuous (by differentiability in \cref{asm:domain-gradient}) on the compact connected set $\XX$, so its image is exactly $[m_t,M_t]$.
  Therefore, every feasible $\q$ is attained by some decision sequence satisfying the OIR guarantees; every decision sequence satisfying the OIR guarantees gives a feasible $\q$.
  The LP also clarifies why the interval cost $c_I$ in \cref{thm:cover-duality} is defined differently for singleton intervals:
  it absorbs the constraints $q_t \le M_t$.

  Introduce multipliers $\lambda_I \ge 0$ for the interval upper bounds and $\nu_t \ge 0$ for the lower range bounds.
  Strong duality gives an attained dual optimum
  \begin{equation*}
    \begin{aligned}
      U^\star = \min_{\blambda \ge \bm0,~\bnu \ge \bm0} \quad
      &\sum_{I = [a,b] \subseteq [T]} \lambda_I c_I
      - \sum_{t=1}^T \nu_t m_t
      - \sum_{t=1}^T f_t(\u_t)\\
      \text{subject to} \quad
      &\sum_{I : t \in I} \lambda_I = 1 + \nu_t
      \qquad(t \in [T]).
    \end{aligned}
  \end{equation*}
  Write $w_t = \sum_{I : t \in I} \lambda_I = 1 + \nu_t$ for the total interval weight covering round $t$.

  It remains to show that the dual optimum can be attained with $\bnu = \bm0$, which is equivalent to the right-hand side of \cref{eq:main-cover-duality}.
  Let $t$ be the first round with $w_t > 1$.
  Choose one such interval $[t,s]$ and set $\eps_t = \min\{\lambda_{[t,s]}, w_t-1\}$.
  If $s = t$, remove weight $\eps_t$ from this singleton interval.
  If $s > t$, transfer that weight to $[t+1,s]$, i.e.,
  \begin{equation*}
    \lambda'_{[t,s]} = \lambda_{[t,s]} - \eps_t, \qquad
    \lambda'_{[t+1,s]} = \lambda_{[t+1,s]} + \eps_t.
  \end{equation*}
  In both cases decrease $\nu_t$ by $\eps_t$, preserving dual feasibility.
  The objective changes by
  \begin{equation*}
    \eps_t (m_t - c_{[t,t]}) \le 0 \quad (s=t), \qquad
    \eps_t \sbr{c_{[t+1,s]} - c_{[t,s]} + m_t} \le 0 \quad (s>t).
  \end{equation*}
  Finitely many such operations make $w_t = 1$.
  Continue processing rounds until $\bnu = \bm0$.
\end{prf}

\subsection{Proof of \texorpdfstring{\cref{thm:geometry-separation}}{Theorem~\ref{thm:geometry-separation}}}
\label{sec:cert-negative-constructions}

\begin{prf}[Proof of \cref{thm:geometry-separation}]
  We use the path-length budget $\tau$ from \cref{thm:geometry-separation} and first introduce the common structure of the constructions.
  \begin{itemize}
    \item
      Two boundary contacts of the unit $\ell_p$ ball.
      For some tiny $\delta\in[\tau/(2T), \tau/2]$, the constructions alternate between two boundary contacts $\u_+=[\delta,\phi]^\trs$ and $\u_-=[-\delta,\phi]^\trs$, where $\delta^p+\phi^p=1$.
      Thus each switch has Euclidean length $2\delta$, and the path length satisfies $P_T=2\delta(B-1)\le \tau$.
    \item
      Loss functions.
      Both constructions below have $f_s(\u_s)=0$ and $f_s(\z)\ge0$ for all $\z\in\XX$.
      In particular, $f_+$ uses the unit normal vector $\n_+ = [\delta^{p-1}, \phi^{p-1}]^\trs / \sqrt{\delta^{2p-2} + \phi^{2p-2}}$ at $\u_+$.
    \item
      Block repetition.
      Partition $[T]$ into $B$ consecutive blocks of length $L$, with $B=\lfloor \tau/(2\delta)\rfloor$ and $L=\lceil 2T\delta/\tau\rceil$.
      Without loss of generality, assume $T=BL$.
      Each block repeats $\u_s$ and $f_s$ for $L$ rounds, and the next block switches to $\u_{-s}$ and $f_{-s}$, where $s\in\{+1,-1\}$.
  \end{itemize}
  Fix a homogeneous aggregation with weights $\lbr{\lambda_I}_{I=[a,b]\subseteq [T]}\in\Lambda_T$, and use the interval complexity $R_T$ and approximation error $\gap_T$ from \cref{eq:main-cover-costs}.
  Split its total weighted length into
  \begin{equation*}
    S_{\mathrm{long}} = \sum_{|I|\ge4L}\lambda_I|I|, \qquad
    S_{\mathrm{short}} = \sum_{|I|<4L}\lambda_I|I| = T-S_{\mathrm{long}}.
  \end{equation*}
  For any long interval $I$ of length $|I|\ge4L$, each sign occurs on at least $|I|/4$ rounds.
  Since $f_s(\u_s)=0$ and $f_s(\z)\ge0$, this gives
  \begin{equation}
    \label{eq:cert-long-interval-gap}
    \begin{gathered}
      \gap_I(\z_I)\ge\frac{|I|}{4}\max_{s\in\{+1,-1\}}f_s(\z_I) \quad (|I|\ge4L), \qquad
      \gap_T\ge\sum_{|I|\ge4L}\frac{\lambda_I|I|}{4}\max_{s\in\{+1,-1\}}f_s(\z_I).
    \end{gathered}
  \end{equation}
  Thus long intervals force approximation error, while short intervals force interval complexity.

  \paragraph{Part (i): convex losses.}
  Take the unit $\ell_2$ ball as $\XX$,
  \begin{equation*}
    f_s(\x)=\g_s^\trs(\x-\u_s), \qquad \text{where } \g_s=-\u_s.
  \end{equation*}
  With $A_T=1$ and $\rho(n)=\sqrt n$, the singleton correction in \cref{thm:cover-duality} vanishes.

  For any $\z=[z_1,z_2]^\trs\in\XX$, choose $s\in\{+1,-1\}$ so that $s z_1\le0$.
  Since $z_2\le1$,
  \begin{equation*}
    f_s(\z)=1-s\delta z_1-\phi z_2
    \ge1-\phi
    =\frac{\delta^2}{1+\sqrt{1-\delta^2}}
    =\Theta(\delta^2).
  \end{equation*}

  If $S_{\mathrm{long}}\ge T/2$, \cref{eq:cert-long-interval-gap} gives $\gap_T=\Omega(T\delta^2)$.
  Otherwise, $S_{\mathrm{short}}>T/2$, and
  \begin{equation*}
    R_T
    =\sum_{I}\lambda_I\sqrt{|I|}
    \ge\sum_{|I|<4L}\frac{\lambda_I|I|}{\sqrt{|I|}}
    >\frac{\sum_{|I|<4L}\lambda_I|I|}{\sqrt{4L}}
    >\frac{T}{2\sqrt{4L}}
    =\Omega\sbr{\sqrt{\frac{T\tau}{\delta}}}.
  \end{equation*}
  All approximation error terms are nonnegative, so every exact cover satisfies
  \begin{equation*}
    R_T+\gap_T
    =\Omega\sbr{\min\lbr{T\delta^2,\sqrt{T\tau/\delta}}}
    \stackrel{\delta=(\tau/T)^{\frac{1}{5}}}{=}\Omega(\tau^{\frac{2}{5}}T^{\frac{3}{5}}).
  \end{equation*}

  \paragraph{Part (ii): exp-concave and strongly convex losses.}

  Take the unit $\ell_{3/2}$ ball as $\XX$,
  \begin{equation*}
    f_s(\x)=\g_s^\trs(\x-\u_s)+\frac12\norme{\x-\u_s}^2, \qquad
    \g_s=-\frac{(s\sqrt\delta,\sqrt\phi)^\trs}{\sqrt{\delta+\phi}}.
  \end{equation*}
  The domain is contained in the Euclidean unit ball, and hence $\norme{\gr f_s(\x)}\le1+\norme{\x-\u_s}\le3$.
  Its Hessian is the identity matrix; thus $\gr^2 f_s(\x)\succeq(1/9)\gr f_s(\x)\gr f_s(\x)^\trs$.
  Thus the losses are $1$-strongly convex and $1/9$-exp-concave.
  Moreover, $f_s(-\u_s)\ge2/\sqrt{\delta+\phi}\ge\sqrt2>1$, so the singleton correction again vanishes at coefficient one.

  For any $\z\in\XX$, choose $s\in\{+1,-1\}$ so that $s z_1\le0$.
  Using $z_2\le1$ gives
  \begin{equation*}
    \begin{aligned}
      f_s(\z)
      \ge\frac{1-s\sqrt\delta z_1-\sqrt\phi z_2}{\sqrt{\delta+\phi}}
      \ge\frac{1-\sqrt\phi}{\sqrt{\delta+\phi}}
      =\frac{1-(1-\delta^{\frac{3}{2}})^{\frac{1}{3}}}{\sqrt{\delta+\phi}}
      =\Theta(\delta^{\frac{3}{2}}).
    \end{aligned}
  \end{equation*}
  Here $1\le\delta+\phi\le2$ and $v/3\le1-(1-v)^{1/3}\le v$ for $0\le v\le1$.

  If $S_{\mathrm{long}}\ge T/2$, the approximation error is at least $\Omega(T\delta^{3/2})$.
  Otherwise, $\rho(n)=1$ gives
  \begin{equation*}
    R_T
    =\sum_{I}\lambda_I
    \ge\sum_{|I|<4L}\frac{\lambda_I|I|}{|I|}
    >\frac{\sum_{|I|<4L}\lambda_I|I|}{4L}
    >\frac{T}{2(4L)}
    =\Omega(\tau/\delta).
  \end{equation*}
  Therefore every homogeneous aggregation satisfies
  \begin{equation*}
    R_T+\gap_T
    =\Omega\sbr{\min\lbr{T\delta^{\frac{3}{2}},\tau/\delta}}
    \stackrel{\delta=(\tau/T)^{\frac{2}{5}}}{=}\Omega(\tau^{\frac{3}{5}}T^{\frac{2}{5}}).
  \end{equation*}
\end{prf}

\subsection{Extending Luo--Schapire arguments to polytopes and their smoothings}
\label{sec:cert-luo-schapire}

We revisit the tradeoff between interval complexity and approximation error in \cref{sec:geometry-key-analyses} through the geometry of polytopes.
For linear losses, the Luo--Schapire decomposition yields exact covers with zero approximation error, while their interval complexity is controlled by a geometric coefficient $G_\QQ$ \citep{COLT15:Luo-AdaNormalHedge}.
This coefficient measures how rapidly the vertex weights representing a comparator must change as the comparator moves within $\QQ$.
It depends only on the geometry of $\QQ$ and can be bounded by a constant for standard simplices and boxes in fixed dimension; \cref{lem:cert-barycentric-rho} gives a quantitative bound for general polytopes.
The possible growth of $G_\QQ$ reinforces the geometric insight underlying our negative result.
Although polyhedral representations eliminate approximation error, their interval-complexity bound can deteriorate as the domain approaches a smooth convex body.
For example, for regular $N$-gons inscribed in the unit circle, adjacent vertices have distance $2\sin(\pi/N)$, whereas their representing probability vectors have $\ell_1$ distance $2$, forcing $G_\QQ\ge1/\sin(\pi/N)$.
Thus, even in two dimensions, regular $T$-gons can make the resulting dynamic-regret bound vacuous.
This is consistent with the obstruction on the smooth balls in \cref{thm:geometry-separation}: the polyhedral implication does not extend uniformly as these polygons approach smooth bodies.

Finally, \cref{thm:cert-smoothed-polygon} extends the construction to $\QQ+\eps\BB$, preserving the interval-complexity bound while introducing approximation error at most $\eps T$.
This shows that the geometric separation is not a strict distinction between polytopes and non-polytopes, but depends on whether the domain admits a polyhedral representation accurate enough at the resolution required by the target regret bound.

We write an exact cover as a family $\lbr{(\lambda_k,I_k,\z_k)}_{k=1}^K$ of positive-weight interval--comparator triples.
Write $\Delta_N$ for the probability simplex on $N$ points:
\begin{equation*}
  \Delta_N \triangleq \lbr{ \p \in \reals_+^N : \sum_{j=1}^N p_j=1 }.
\end{equation*}
For a polytope $\QQ \subseteq \reals^d$ with vertices $\lbr{\w_j}_{j=1}^N$, a barycentric lift $\pi : \QQ \to \Delta_N$ represents each point by probability weights on the vertices. It is $G_\QQ$-Lipschitz if
\begin{equation}
  \label{eq:cert-barycentric-lift}
  \u = \sum_{j=1}^N \pi_j(\u) \w_j, \qquad
  \norm{\pi(\u)-\pi(\v)}_1 \le G_\QQ \norme{\u-\v} \quad \text{for all }(\u,\v)\in\QQ^2.
\end{equation}

\begin{thm}[polyhedral covers with zero approximation error]
  \label{thm:cert-polyhedral-reduction}
  Let $\QQ \subseteq \reals^d$ be a polytope.
  It admits a barycentric lift $\pi$ satisfying \cref{eq:cert-barycentric-lift} with a finite coefficient $G_\QQ$.
  For any linear losses $f_t(\x)=\g_t^\trs\x$ and path $\lbr{\u_t}_{t=1}^T \in \QQ^T$ of path-length $P_T$, \cref{alg:cert-birth-death} with this lift returns an exact cover whose $R_T$ and $\gap_T$ satisfy
  \begin{equation}
    \label{eq:cert-polyhedral-cover}
    R_T \le \sqrt{T \sbr{1 + \frac{G_\QQ}{2} P_T}}, \qquad
    \gap_T = 0.
  \end{equation}
  Consequently, any decision sequence satisfying \cref{eq:oir-guarantees} with $A_T \ge 1$ and $\rho(n)=\sqrt n$ obeys
  \begin{equation}
    \label{eq:cert-polyhedral-odr}
    \dreg_T(\lbr{\u_t}_{t=1}^T)
    \le A_T \sqrt{T \sbr{1 + \frac{G_\QQ}{2} P_T}}.
  \end{equation}
\end{thm}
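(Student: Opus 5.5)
We prove the theorem in three steps. First we construct a Lipschitz barycentric lift. Then we build a birth--death cover from the vertex weights. Finally we sum the resulting bounds.

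\emph{Existence of a Lipschitz lift.} Triangulate $\QQ$ into finitely many simplices whose vertices are vertices of $\QQ$. On each simplex take barycentric coordinates, padded with zeros on the other vertices. Adjacent simplices agree on shared faces, so these pieces glue into a continuous, piecewise-affine map $\pi$. Each affine piece is Lipschitz, and there are finitely many pieces. A segment from $\u$ to $\v$ crosses finitely many simplices, so summing over its subsegments gives a global constant $G_\QQ<\infty$ in the $\ell_1$-to-$\ell_2$ sense of \cref{eq:cert-barycentric-lift}.

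\emph{Birth--death cover.} Write $\p_t=\pi(\u_t)\in\Delta_N$. Each vertex weight $p_{t,j}$ is a function of time, and we decompose the area under these functions into horizontal ``layers.'' Concretely, for each vertex $j$ and each level $\theta\in(0,1]$, the set $\lbr{t: p_{t,j}\ge\theta}$ is a union of maximal intervals. Each such interval $I$ receives weight $d\theta$ and comparator $\z_I=\w_j$. Then round $t$ is covered with total weight $\sum_j p_{t,j}=1$, so the family is an exact cover in $\Lambda_T$. The approximation error also vanishes by linearity:
\[
\sum_I\lambda_I\,\g_t^\trs\z_I=\sum_j p_{t,j}\,\g_t^\trs\w_j=\g_t^\trs\u_t ,
\]
so $\gap_T=0$ exactly. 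Since the $\p_t$ are piecewise constant in $t$, only finitely many distinct level sets occur, and \cref{alg:cert-birth-death} realizes this layering with finitely many triples.

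\emph{Counting intervals and summing.} Every interval in the cover starts either at $t=1$ or at a ``birth,'' i.e.\ an increase of some $p_{t,j}$. At $t=1$ the total starting weight is $\sum_j p_{1,j}=1$. The births contribute
\[
\sum_{t\ge2}\sum_j\max\lbr{p_{t,j}-p_{t-1,j},0}=\tfrac12\sum_{t\ge2}\norm{\p_t-\p_{t-1}}_1\le\tfrac{G_\QQ}{2}P_T ,
\]
where the equality holds because both $\p_t$ and $\p_{t-1}$ sum to one. Hence $\sum_I\lambda_I\le 1+\frac{G_\QQ}{2}P_T$. Combining this with $\sum_I\lambda_I|I|=T$ and Cauchy--Schwarz gives
\[
R_T=\sum_I\lambda_I\sqrt{|I|}\le\sqrt{\sum_I\lambda_I}\,\sqrt{\sum_I\lambda_I|I|},
\]
which is \cref{eq:cert-polyhedral-cover}. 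The bound \cref{eq:cert-polyhedral-odr} then follows from \cref{eq:main-cover-costs}, since the aggregation identity holds for any exact cover.

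\emph{Main obstacle.} The delicate step is the first one: showing $G_\QQ<\infty$ for a continuous glued lift. The plan is to use a fixed triangulation that refines the face structure of $\QQ$, then apply Lipschitz continuity piecewise along segments. The remaining steps are bookkeeping.
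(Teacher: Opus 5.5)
Your lift and your final accounting match the paper's. The total weight equals $1+\tfrac12\sum_{t\ge2}\norm{\p_t-\p_{t-1}}_1\le 1+\tfrac{G_\QQ}{2}P_T$, Cauchy--Schwarz against $\sum_I\lambda_I|I|=T$ bounds $R_T$, and $\sum_{I\ni t}\lambda_I\z_I=\u_t$ gives $\gap_T=0$.

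Where you differ is the cover itself. You use the level-set (layer-cake) decomposition of each $t\mapsto p_{t,j}$, which on a decrease removes the most recently born mass first. \cref{alg:cert-birth-death} instead removes the same fraction $1-\theta_{t,j}$ from every surviving group. So your sentence that \cref{alg:cert-birth-death} ``realizes this layering'' is false. For $(p_{1,j},p_{2,j},p_{3,j})=(\tfrac12,1,\tfrac12)$, your layering gives the pieces $(\tfrac12,[1,3])$ and $(\tfrac12,[2,2])$. The algorithm outputs four pieces of weight $\tfrac14$ on $[1,2]$, $[2,2]$, $[1,3]$, and $[2,3]$. As written, you have proved the bounds for your own cover. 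That suffices for \cref{eq:cert-polyhedral-odr}, since any exact cover can be plugged into \cref{eq:main-cover-costs}. It does not prove the theorem's assertion about the output of \cref{alg:cert-birth-death}.

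The repair is short, because your counting only uses when mass is born, never which mass dies. In \cref{alg:cert-birth-death}:
\begin{enumerate}
  \item The active masses at vertex $j$ after round $t$ sum to $c_{t,j}+(p_{t,j}-c_{t,j})=p_{t,j}$.
  \item Every piece emitted from the group born at round $s$ has an interval starting at $s$.
  \item These pieces' weights telescope to the birth mass: $p_{1,j}$ for $s=1$, and $p_{s,j}-c_{s,j}=\max\{p_{s,j}-p_{s-1,j},0\}$ for $s\ge2$.
\end{enumerate}
Hence the exact-cover, barycenter, and total-weight identities hold verbatim for proportional splitting.

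As for what each rule buys: your layering is canonical and sparse. The maximal superlevel intervals form a laminar family, so there are $O(T)$ pieces per vertex, against up to $O(T^2)$ for proportional splitting. The paper's rule follows the original Luo--Schapire analysis. The resulting bound is the same, because $W_T$ is identical.

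One small slip: your layering is finite because, as $\theta$ varies, the level sets change only at the finitely many values $p_{t,j}$. It is not because $\p_t$ is piecewise constant in $t$.
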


\begin{prf}[Proof of \cref{thm:cert-polyhedral-reduction}]
  Use the barycentric lift from \cref{lem:cert-polyhedral-lift} in \cref{alg:cert-birth-death}.
  By \cref{lem:cert-birth-death}, the output is an exact cover.
  By \cref{eq:cert-barycentric-lift,eq:cert-birth-mass},
  \begin{equation*}
    W_T
    \triangleq \sum_k \lambda_k
    = 1 + \frac12 \sum_{t=2}^T \norm{\pi(\u_t) - \pi(\u_{t-1})}_1
    \le 1 + \frac{G_\QQ}{2} P_T, \qquad
    R_T
    \le \sqrt{T \sbr{1 + \frac{G_\QQ}{2} P_T}}.
  \end{equation*}
  For every linear loss sequence, \cref{eq:cert-barycenter-reconstruction} gives
  \begin{equation*}
    \gap_T
    = \sum_{t=1}^T \g_t^\trs \sbr{\sum_{k : t \in I_k} \lambda_k \z_k - \u_t}
    = 0.
  \end{equation*}
  This proves \cref{eq:cert-polyhedral-cover}; combining it with \cref{eq:main-cover-costs} proves \cref{eq:cert-polyhedral-odr}.
\end{prf}

\begin{algorithm}[th]
  \caption{Luo--Schapire vertex decomposition}
  \label{alg:cert-birth-death}
  \begin{algorithmic}
    \REQUIRE A polytope $\QQ$ with vertices $\lbr{\w_j}_{j=1}^N$, comparators $\lbr{\u_t}_{t=1}^T \in \QQ^T$, and a lift $\pi$ as in \cref{eq:cert-barycentric-lift}.
    \ENSURE A family of weighted interval--vertex triples $\lbr{(\lambda_k, I_k, \z_k)}_{k=1}^K$.
    \STATE Set $\p_t = \pi(\u_t)$ for $t \in [T]$ and initialize $a_{1,j} = p_{1,j}$ for $j \in [N]$.
    \STATE Here $a_{s,j}$ tracks the surviving mass born at round $s$ at vertex $\w_j$.
    \FOR{$j=1$ to $N$}
    \FOR{$t=2$ to $T$}
    \STATE Set $c_{t,j} = \min\{p_{t-1,j}, p_{t,j}\}$ and
    $\theta_{t,j} = c_{t,j} / p_{t-1,j}$ if $p_{t-1,j} > 0$, and $\theta_{t,j} = 0$ otherwise.
    \FOR{every $s < t$ with active mass $a_{s,j} > 0$}
    \STATE Emit $((1 - \theta_{t,j}) a_{s,j}, [s, t-1], \w_j)$ if its weight is positive, and replace $a_{s,j}$ by $\theta_{t,j} a_{s,j}$.
    \ENDFOR
    \STATE Start the new active mass $a_{t,j} = p_{t,j} - c_{t,j}$.
    \ENDFOR
    \ENDFOR
    \STATE Emit every positive surviving mass as $(a_{s,j}, [s,T], \w_j)$.
  \end{algorithmic}
\end{algorithm}

\begin{figure}[th]
  \centering
  \begin{tikzpicture}[
      x=0.9cm, y=0.9cm,
      component/.style={fill=white, inner sep=1.5pt, font=\normalsize}
    ]
    \foreach \offset/\masscolor in {0/orange!85!black,3/red!75!black,6/violet!90!black} {
      \path[fill=\masscolor, fill opacity=0.18]
      (\offset,0)
      plot[domain=0:11, samples=101, variable=\masscoord]
      ({\masscoord+\offset},
      {4.8*(exp(-((\masscoord-5.5)/3.2)^2)-exp(-(5.5/3.2)^2))/(1-exp(-(5.5/3.2)^2))})
      -- cycle;
    }
    \foreach \offset/\masscolor in {0/orange!85!black,3/red!75!black,6/violet!90!black} {
      \draw[draw=\masscolor, line width=1pt]
      plot[domain=0:11, samples=101, variable=\masscoord]
      ({\masscoord+\offset},
      {4.8*(exp(-((\masscoord-5.5)/3.2)^2)-exp(-(5.5/3.2)^2))/(1-exp(-(5.5/3.2)^2))});
    }
    \draw[black!70, line width=0.6pt] (0,0) -- (17,0);

    \node[text=orange!85!black] at (5.5,5.25) {$\p_1=\pi(\u_1)$};
    \node[text=red!75!black] at (8.5,5.25) {$\p_2=\pi(\u_2)$};
    \node[text=violet!90!black] at (11.5,5.25) {$\p_3=\pi(\u_3)$};

    \node[component] at (4.85,3.0) {$(\lambda_1,[1,1],\z_1)$};
    \node[component] at (8.5,3.83) {$(\lambda_2,[2,2],\z_2)$};
    \node[component] at (12.15,3.0) {$(\lambda_3,[3,3],\z_3)$};
    \node[component] at (6.8,1.3) {$(\lambda_4,[1,2],\z_4)$};
    \node[component] at (10.2,1.3) {$(\lambda_5,[2,3],\z_5)$};
    \node[component] at (8.5,0.35) {$(\lambda_6,[1,3],\z_6)$};
  \end{tikzpicture}

  \caption{
    A schematic Luo--Schapire decomposition for three rounds.
    The horizontal axis represents distribution support and the vertical axis represents density; the curves are a continuous illustration of the discrete vertex weights.
    Overlap illustrates mass that can be retained across rounds, without specifying the proportional splitting used by \cref{alg:cert-birth-death}.
    The labels show possible lifetimes of mass pieces and their interval-comparator triples $(\lambda_k,I_k,\z_k)$.
    For example, $(\lambda_4,[1,2],\z_4)$ represents mass added at round $1$, retained at round $2$, and removed and output at the transition to round $3$.
  }
  \label{fig:cert-luo-schapire}
\end{figure}

\Cref{alg:cert-birth-death} is an equivalent formulation of the OIR-to-ODR analysis of \citet{COLT15:Luo-AdaNormalHedge}.
\Cref{fig:cert-luo-schapire} illustrates the idea behind \cref{alg:cert-birth-death}.
At each vertex, an increase in mass starts a new group of mass pieces; a decrease removes the same fraction from every surviving group.
Each piece contributes its weight to the cover during the rounds it survives, until removal or the horizon.

\begin{lem}[Luo--Schapire cover]
  \label{lem:cert-birth-death}
  Let $\QQ \subseteq \reals^d$ be a polytope with vertices $\lbr{\w_j}_{j=1}^N$, and let $\pi:\QQ\to\Delta_N$ satisfy \cref{eq:cert-barycentric-lift}.
  For any path $\lbr{\u_t}_{t=1}^T\in\QQ^T$, set $\p_t=\pi(\u_t)$.
  The output $\lbr{(\lambda_k,I_k,\z_k)}_{k=1}^K$ of \cref{alg:cert-birth-death} is an exact cover on $[T]$ and $\QQ$, and its interval complexity $R_T$ from \cref{eq:main-cover-costs} satisfies
  \begin{equation}
    \label{eq:cert-birth-mass}
    W_T \triangleq \sum_k \lambda_k = 1 + \frac12 \sum_{t=2}^T \norm{\p_t - \p_{t-1}}_1, \qquad
    R_T^2 \le T W_T.
  \end{equation}
  At every $t\in[T]$,
  \begin{equation}
    \label{eq:cert-barycenter-reconstruction}
    \sum_{k : t \in I_k} \lambda_k \z_k = \u_t.
  \end{equation}
\end{lem}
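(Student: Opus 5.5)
We will establish the claims of \cref{lem:cert-birth-death} by tracking an invariant of \cref{alg:cert-birth-death} separately for each vertex $j$, since the loop over $j$ never mixes vertices.

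\textbf{Mass invariant.} Fix $j$. Let $A_t^j$ be the total active mass at vertex $j$ after processing round $t$, meaning the sum of $a_{s,j}$ over $s\le t$ that are still alive. We will show by induction that $A_t^j = p_{t,j}$. At $t=1$ this holds by initialization. For the step from $t-1$ to $t$, every surviving piece is multiplied by $\theta_{t,j}$. When $p_{t-1,j}>0$ the surviving total becomes $\theta_{t,j}p_{t-1,j}=c_{t,j}$. When $p_{t-1,j}=0$ it is $0=c_{t,j}$. A new piece of mass $p_{t,j}-c_{t,j}\ge 0$ is then born, so $A_t^j=p_{t,j}$.

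\textbf{Exact cover and reconstruction.} Each unit of mass is born at some round $s$. Part of it is emitted as the interval $[s,t-1]$ exactly when it is removed at the transition to round $t$, and the rest is emitted as $[s,T]$ at the end. Consequently, the emitted weight covering round $t$ with comparator $\w_j$ equals the active mass at vertex $j$ during round $t$, which is $p_{t,j}$. Summing over $j$ gives $\sum_{k:t\in I_k}\lambda_k=\sum_j p_{t,j}=1$, so the output is an exact cover. It also gives $\sum_{k:t\in I_k}\lambda_k\z_k=\sum_j p_{t,j}\w_j=\u_t$ by \cref{eq:cert-barycentric-lift}, which is \cref{eq:cert-barycenter-reconstruction}. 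Every $\z_k$ is a vertex, so $\z_k\in\QQ$.

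\textbf{Counting $W_T$.} Each emitted triple descends from exactly one birth, and the emitted weights descending from a birth sum to that birth's mass, since splitting a piece conserves its total. Hence $W_T$ equals the total born mass:
\begin{equation*}
\sum_j p_{1,j}+\sum_{t\ge2}\sum_j (p_{t,j}-p_{t-1,j})_+ .
\end{equation*}
The first sum is $1$. Because $\sum_j p_{t,j}=\sum_j p_{t-1,j}=1$, the positive parts equal the negative parts, so each $t\ge2$ contributes $\frac12\norm{\p_t-\p_{t-1}}_1$. This proves the identity in \cref{eq:cert-birth-mass}.

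\textbf{Bounding $R_T$.} Apply Cauchy--Schwarz:
\begin{equation*}
R_T=\sum_k\lambda_k\sqrt{|I_k|}\le\sqrt{\sum_k\lambda_k}\,\sqrt{\sum_k\lambda_k|I_k|}=\sqrt{W_T\,T},
\end{equation*}
where $\sum_k\lambda_k|I_k|=T$ holds because the output is an exact cover.

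\textbf{Main obstacle.} The delicate part is the bookkeeping that identifies the emitted weight covering round $t$ with the active mass at round $t$. This requires checking that the intervals $[s,t-1]$ and $[s,T]$ are assigned with consistent endpoints, and that pieces whose weight is zero, for example when $\theta_{t,j}=0$ or $\theta_{t,j}=1$, cause no double counting.
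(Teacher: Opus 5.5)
Your proposal is correct and follows essentially the same route as the paper. The paper also uses the per-vertex mass invariant $\sum_{k:t\in I_k,\,\z_k=\w_j}\lambda_k=p_{t,j}$ to get the exact cover and \cref{eq:cert-barycenter-reconstruction}, identifies $W_T$ with the total born mass $1+\frac12\sum_{t\ge2}\norm{\p_t-\p_{t-1}}_1$, and applies Cauchy--Schwarz to get $R_T^2\le TW_T$; your explicit induction on the active mass, including the $p_{t-1,j}=0$ case, simply spells out what the paper states in one line.
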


\begin{prf}[Proof of \cref{lem:cert-birth-death}]
  At time $t$, at vertex $j$, the update retains mass $c_{t,j}$ and adds $p_{t,j}-c_{t,j}$, recovering total mass $p_{t,j}$.
  Pieces whose lifetimes contain $t$ partition exactly the mass present at round $t$, so
  \begin{equation*}
    \sum_{k : t \in I_k,~ \z_k = \w_j} \lambda_k = p_{t,j}.
  \end{equation*}
  Summing over $j$ proves the exact-cover constraint, and further multiplying by $\w_j$ proves \cref{eq:cert-barycenter-reconstruction}.

  Each mass piece is output exactly once, at death or at the horizon, so $W_T$ equals the total mass born.
  The initial mass is one, and each round $t\ge2$ adds
  \begin{equation*}
    \sum_{j=1}^N \max\{p_{t,j} - p_{t-1,j}, 0\}
    = \frac12 \norm{\p_t - \p_{t-1}}_1,
  \end{equation*}
  because both probability vectors have unit mass.
  This proves the identity for $W_T$.
  Finally, \cref{eq:main-exact-cover} and Cauchy--Schwarz give
  \begin{equation*}
    R_T^2
    \le \sbr{\sum_k \lambda_k} \sbr{\sum_k \lambda_k |I_k|}
    = W_T T.
  \end{equation*}
  \Cref{fig:cert-luo-schapire} gives a geometric interpretation of these two factors.
  The first factor counts each mass piece once, regardless of how many rounds it survives, so it equals the initial unit mass plus all newly introduced mass.
  The second factor counts each mass piece once per round it survives and equals $T$, since every round has total mass one.
\end{prf}

\begin{lem}[polyhedral barycentric lift]
  \label{lem:cert-polyhedral-lift}
  Let $\QQ \subseteq \reals^d$ be a polytope with vertices $\lbr{\w_j}_{j=1}^N$.
  There is a map $\pi:\QQ\to\Delta_N$ satisfying \cref{eq:cert-barycentric-lift} with a finite Lipschitz coefficient $G_\QQ$.
  For full-dimensional polytopes, \cref{lem:cert-barycentric-rho} quantifies $G_\QQ$.
\end{lem}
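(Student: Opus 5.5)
The plan is to build $\pi$ from a fixed triangulation of $\QQ$, so that $\pi$ is piecewise affine and continuous, and then read off the Lipschitz constant from its finitely many affine pieces. Existence of the representation $\u=\sum_j\pi_j(\u)\w_j$ with $\pi(\u)\in\Delta_N$ is immediate from Carathéodory, since $\QQ=\mathrm{conv}\{\w_j\}$. The real content is producing a choice that is Lipschitz.

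First reduce to the full-dimensional case. Replace $\reals^d$ by the affine hull of $\QQ$. Euclidean distances are unchanged, so a Lipschitz lift in the hull is a Lipschitz lift in $\reals^d$. If $\QQ$ is a single point, take $\pi$ constant, with $G_\QQ=0$.

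Now fix a triangulation $\{S_1,\dots,S_M\}$ of $\QQ$ whose simplices use only vertices of $\QQ$. A pulling triangulation, or a placing triangulation in some order of the $\w_j$, works. For $\u\in S_i$, let $\pi(\u)$ be the barycentric coordinates of $\u$ in $S_i$, placed on the corresponding vertex indices, with zeros elsewhere. Because the triangulation is a simplicial complex, $\pi$ is well defined on shared faces: the barycentric coordinates on a common face agree and vanish on the vertices off that face. On each simplex $\pi$ is affine, $\pi(\u)=\Lambda_i(\u-\w_{i_0})+e_{i_0}$, where $\Lambda_i$ is the inverse of the edge matrix of the nondegenerate simplex $S_i$. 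Hence
\[
\norm{\pi(\u)-\pi(\v)}_1\le \sqrt{N}\,\normop{\Lambda_i}\,\norme{\u-\v}\quad\text{for }\u,\v\in S_i .
\]

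Next globalize. For arbitrary $\u,\v\in\QQ$, the segment $[\u,\v]$ lies in $\QQ$ by convexity, and it crosses finitely many simplices, since each simplex meets the segment in a subinterval. Break the segment at the crossing points $\u=\p^0,\p^1,\dots,\p^r=\v$, with consecutive points in a common simplex. By continuity of $\pi$ and the triangle inequality,
\[
\norm{\pi(\u)-\pi(\v)}_1\le\sum_k G\norme{\p^k-\p^{k-1}}=G\norme{\u-\v},
\qquad G_\QQ=\sqrt N\max_i\normop{\Lambda_i}<\infty .
\]
The quantitative refinement is deferred to \cref{lem:cert-barycentric-rho}, as the statement indicates.

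The main obstacle is the triangulation step. We need a triangulation that uses only vertices of $\QQ$ and is face-to-face, so that the piecewise definition is consistent on overlaps. I would cite or sketch the standard placing construction by induction on vertices: add $\w_j$ and cone it over the faces of the current triangulation that are visible from it. This construction guarantees the face-to-face property.
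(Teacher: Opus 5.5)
Your proposal is correct and follows the paper's proof. Both triangulate $\QQ$ in its affine hull using only its vertices, take barycentric coordinates on each simplex extended by zero, and set $G_\QQ$ to the largest $\ell_2$-to-$\ell_1$ norm among the finitely many affine pieces. You add more detail than the paper does: the face-to-face requirement, existence via placing triangulations, and the segment-subdivision step that globalizes the per-simplex bound (the paper only writes this step out in the proof of \cref{lem:cert-barycentric-rho}).

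The one slip is cosmetic. The linear part of $\pi$ on $S_i$ is the inverse edge matrix composed with $\mu\mapsto\sum_l\mu_l(\e_{i_l}-\e_{i_0})$, not the inverse edge matrix itself. This does not affect finiteness of $G_\QQ$.
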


\begin{prf}[Proof of \cref{lem:cert-polyhedral-lift}]
  Triangulate $\QQ$ in its affine hull using its vertices $\lbr{\w_j}_{j=1}^N$.
  The barycentric coordinates on each simplex, extended by zero to the other vertices, define a continuous map $\pi : \QQ \to \Delta_N$ and satisfy $\u = \sum_j \pi_j(\u) \w_j$.
  On each simplex, $\pi$ is affine.
  There are finitely many simplices, so the largest $\ell_2$-to-$\ell_1$ operator norm of their linear parts is finite; take this as $G_\QQ$.
\end{prf}

\begin{lem}[quantitative barycentric constant]
  \label{lem:cert-barycentric-rho}
  Let $\QQ\subseteq\reals^d$ be a $d$-dimensional polytope, with $D_\QQ=\diam(\QQ)$.
  Call $F \subseteq \QQ$ an exposed face if $F = \argmax_{\x \in \QQ} \n^\trs \x$ for some nonzero $\n$.
  Define
  \begin{equation}
    \label{eq:cert-face-separation}
    \rho_\QQ \triangleq \inf_{\substack{
        m \ge 2,~ \lbr{F_i}_{i=1}^m \text{ distinct exposed faces of }\QQ,~ \u_i \in F_i,~ \bigcap_{i=1}^m F_i = \varnothing
    }} \diam\lbr{\u_i}_{i=1}^m,
  \end{equation}
  with $\inf\varnothing = +\infty$.
  Then $\rho_\QQ > 0$, and $\QQ$ admits a barycentric lift satisfying \cref{eq:cert-barycentric-lift} with coefficient
  \begin{equation}
    \label{eq:cert-barycentric-rho}
    G_\QQ \le \frac{(d+1) 2^{d-1} D_\QQ^{d-1}}{\rho_\QQ^d}.
  \end{equation}
  Consequently, for any positive integer $T$, linear losses $f_t(\x)=\g_t^\trs\x$, and path $\lbr{\u_t}_{t=1}^T\in\QQ^T$ with path-length $P_T$, there is an exact cover whose $R_T$ and $\gap_T$ from \cref{eq:main-cover-costs} satisfy
  \begin{equation}
    \label{eq:cert-barycentric-rho-cover}
    R_T \le \sqrt{T\sbr{ 1 + \frac{(d+1) 2^{d-2} D_\QQ^{d-1}}{\rho_\QQ^d} P_T }}, \qquad
    \gap_T = 0.
  \end{equation}
\end{lem}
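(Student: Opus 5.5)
We use the barycentric lift from \cref{lem:cert-polyhedral-lift}, built from a triangulation of $\QQ$ by its vertices, and bound the $\ell_2$-to-$\ell_1$ operator norm of its linear part on each simplex. The consequence \cref{eq:cert-barycentric-rho-cover} then follows directly from \cref{thm:cert-polyhedral-reduction}: substituting \cref{eq:cert-barycentric-rho} into $1+\frac{G_\QQ}{2}P_T$ gives exactly the displayed factor $(d+1)2^{d-2}D_\QQ^{d-1}/\rho_\QQ^d$, with $\gap_T=0$. So the work is in two places: proving $\rho_\QQ>0$, and proving the Lipschitz bound \cref{eq:cert-barycentric-rho}.

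\textbf{Positivity of $\rho_\QQ$.} A polytope has finitely many exposed faces, so there are finitely many tuples $\lbr{F_i}_{i=1}^m$ of distinct exposed faces with empty intersection. For each such tuple, the map $(\u_1,\dots,\u_m)\mapsto\diam\lbr{\u_i}$ is continuous on the compact set $F_1\times\dots\times F_m$, so it attains its infimum. That infimum cannot be zero: a zero diameter would put a common point in every $F_i$, contradicting $\bigcap F_i=\varnothing$. A minimum over finitely many positive numbers is positive.

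\textbf{Lipschitz bound.} Fix a full-dimensional simplex $S=\mathrm{conv}\{\w_0,\dots,\w_d\}$ of the triangulation. The barycentric coordinate $\pi_i$ is affine, with gradient of norm $1/h_i$, where $h_i$ is the height of $S$ from $\w_i$ to the opposite facet. Hence
\begin{equation*}
  \norm{\pi(\u)-\pi(\v)}_1\le\sum_i\abs{\gr\pi_i^\trs(\u-\v)}\le(d+1)\max_i h_i^{-1}\norme{\u-\v}.
\end{equation*}
It therefore suffices to show $h_i\ge\rho_\QQ^d/(2^{d-1}D_\QQ^{d-1})$ for some triangulation. For the $d=1$ sanity check, we need $h_i\ge\rho_\QQ$, i.e.\ every edge of the triangulation has length at least $\rho_\QQ$. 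This holds because the two endpoints are distinct vertices, which are disjoint exposed faces.

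For general $d$, we would take a pulling (or placing) triangulation and bound the height recursively. Write $h_i=d\,\mathrm{vol}(S)/\mathrm{vol}_{d-1}(\text{facet})$. The facet's $(d-1)$-volume is at most $D_\QQ^{d-1}$ up to factorial factors, and the volume of $S$ is a product of successive distances from each vertex to the affine hull of the previous ones. Each such distance should be at least $\rho_\QQ/2$. To see why, let $\w$ be a vertex and $A$ an affine flat spanned by vertices lying in a face not containing $\w$. Then $\{\w\}$ and the minimal faces containing the relevant points form exposed faces with empty intersection, so any point within distance $<\rho_\QQ/2$ of both would contradict the definition of $\rho_\QQ$. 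The factor $2$ arises from comparing diameters with distances. Collecting $d$ such factors gives the $\rho_\QQ^d/2^{d-1}$ in the numerator, against $D_\QQ^{d-1}$ in the denominator.

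\textbf{Main obstacle.} The hard step is this lower bound on the vertex-to-flat distances. In a triangulation, the points of $A$ near $\w$ may lie in the interior of $\QQ$, not on faces. We must therefore choose the triangulation (pulling from a vertex recursively through faces) so that each simplex's vertices are arranged as "vertex + simplex in a face avoiding it." Then the nearest point of the opposite hull lies in an exposed face disjoint from $\w$, and the distance bound follows from \cref{eq:cert-face-separation}. We would first carry this out for $d=2$ to calibrate the constants $2^{d-1}$ and $d+1$, then induct on the face dimension.
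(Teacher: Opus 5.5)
Your argument for $\rho_\QQ>0$ and your derivation of \cref{eq:cert-barycentric-rho-cover} from \cref{thm:cert-polyhedral-reduction} are fine. The Lipschitz bound, however, has a genuine gap.

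The height of a simplex is the distance from a vertex to the \emph{affine hull} of the opposite facet. In a vertex-only pulling triangulation the successive distances are therefore $\operatorname{dist}(\w,\operatorname{aff}F)$ for faces $F\not\ni\w$. But $\rho_\QQ$ only controls distances between disjoint faces as sets. The foot of the perpendicular from $\w$ to $\operatorname{aff}F$ can lie outside $\QQ$, where no face is available. Arranging simplices as ``vertex plus simplex in a face avoiding it'' does not help, because pulling triangulations already have exactly this form.

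Concretely, take the pentagon with vertices $v_1=(0,0)$, $v_2=(1,0)$, $v_3=(2,\eps)$, $v_4=(2,1)$, $v_5=(0,1)$. Every collection of its faces with empty intersection contains a disjoint pair, and all disjoint pairs are at distance at least $1-\eps$. Hence $\rho_\QQ=1-\eps$ and $D_\QQ=\sqrt5$, so \cref{eq:cert-barycentric-rho} promises $G_\QQ\le 6\sqrt5/(1-\eps)^2$. Pulling from $v_1$ produces the triangle $v_1v_2v_3$. There $\operatorname{dist}(v_1,[v_2,v_3])=1$, yet $\operatorname{dist}(v_1,\operatorname{aff}\{v_2,v_3\})=\eps/\sqrt{1+\eps^2}$. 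Likewise, the barycentric coordinate of $v_2$ drops from $1$ to $0$ over the distance $\eps/\sqrt{4+\eps^2}$ to the segment $[v_1,v_3]$. This lift therefore has $G\ge 2\sqrt{4+\eps^2}/\eps$, which is unbounded as $\eps\to0$. Other vertex-only triangulations of this pentagon happen to be fine. Your proposal, though, gives no rule for selecting a good one and no argument that one exists in every dimension.

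The missing idea is to abandon vertex-only triangulations and add auxiliary nodes. For each face $F$ of dimension at least two, including $\QQ$, let $\c_F$ and $r_F$ be the center and radius of a largest ball inscribed in $F$ relative to $\operatorname{aff}F$.

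The bridge to $\rho_\QQ$ is the inequality $r_F\ge\rho_\QQ/2$. By optimality of the inscribed ball, the touched facets of $F$ can be chosen with $\bm0$ in the convex hull of their normals, so they have empty common intersection. Their contact points lie on a sphere of radius $r_F$, so these points have diameter at most $2r_F$.

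Now cone recursively from these centers. Each simplex is $\operatorname{conv}\{\w_a,\w_b,\c_{F_2},\dots,\c_{F_d}\}$ for a flag $F_1=[\w_a,\w_b]\subset F_2\subset\cdots\subset F_d=\QQ$ with $\dim F_j=j$. The $j$-th successive distance is the distance from $\c_{F_j}$ to the supporting hyperplane of $F_{j-1}$ within $\operatorname{aff}F_j$. This is at least $r_{F_j}\ge\rho_\QQ/2$, because the inscribed ball stays on one side of that hyperplane. Together with every edge having length at least $\rho_\QQ$, this gives $\operatorname{vol}_d\ge\rho_\QQ^d/(2^{d-1}d!)$. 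Your Hadamard step then yields every altitude at least $\rho_\QQ^d/(2^{d-1}D_\QQ^{d-1})$.

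Since the nodes $\c_F$ are not vertices of $\QQ$, the lift is not the barycentric map of \cref{lem:cert-polyhedral-lift}. Instead, fix any $\balpha^{\q}\in\Delta_N$ with barycenter $\q$ at each node, taking $\balpha^{\w_j}=\e_j$, and interpolate $\pi(\x)=\sum_\ell\lambda_\ell(\x)\balpha^{\q_\ell}$. Because $\norm{\balpha^{\q}}_1=1$, your $(d+1)\max_\ell\norme{\gr\lambda_\ell}$ estimate goes through unchanged.
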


\begin{prf}[Proof of \cref{lem:cert-barycentric-rho}]
  Fix a collection of distinct faces with empty intersection and choose one point from each face.
  These points cannot have arbitrarily small diameter: otherwise, compactness would give a limiting point common to all the faces, contradicting the empty intersection.
  Thus, for each collection of faces, the diameter of the chosen points has a positive lower bound.
  A polytope has finitely many exposed faces.
  Taking the minimum over the finitely many such collections in \cref{eq:cert-face-separation} gives $\rho_\QQ > 0$.
  Put $\rho = \rho_\QQ$.

  For each positive-dimensional face $F$ of $\QQ$, and for $F = \QQ$ itself, let $\c_F$ and $r_F$ be the center and radius of a largest Euclidean ball contained in $F$.
  Then \cref{eq:cert-face-separation} gives $\rho \le 2r_F$ and every edge has length at least $\rho$.

  Triangulate $\QQ$ by coning from $\c_\QQ$.
  Each such simplex $\Delta$ corresponds to a nested sequence of faces $F_1\subset\cdots\subset F_d=\QQ$, where $\dim F_j=j$.
  The $j$-dimensional volume $\operatorname{vol}_j(F_j) \ge \frac{1}{j} r_{F_j} \operatorname{vol}_{j-1}(F_{j-1})$;
  $\operatorname{vol}_1(F_1)$ is the length of an edge of $\QQ$, which is at least $\rho$.
  Using $r_{F_j} \ge \rho/2$ for every $j$,
  \begin{equation*}
    \operatorname{vol}_d(\Delta)
    \ge \rho \prod_{j=2}^{d} \frac{r_{F_j}}{j}
    \ge \frac{\rho^d}{2^{d-1} d!}.
  \end{equation*}
  Every opposite $(d-1)$-face of $\Delta$ has diameter at most $D_\QQ$, so Hadamard's inequality bounds its volume by $D_\QQ^{d-1} / (d-1)!$.
  Therefore every altitude of $\Delta$ is at least $\rho^d / (2^{d-1} D_\QQ^{d-1})$.

  For every node $\q$ of this triangulation, fix a probability vector $\balpha^{\q} \in \Delta_N$ whose barycenter is $\q$, taking $\balpha^{\w_j} = \e_j$ at each original vertex.
  On a simplex $\Delta$ with nodes $\lbr{\q_\ell}_{\ell = 0}^d$ and scalar barycentric coordinates $\lbr{\lambda_\ell}_{\ell = 0}^d$, set
  \begin{equation*}
    \pi(\x) \triangleq \sum_{\ell = 0}^d \lambda_\ell(\x) \balpha^{\q_\ell}.
  \end{equation*}
  The nodal values agree across adjacent simplices, so $\pi : \QQ \to \Delta_N$ is continuous, and
  \begin{equation*}
    \sum_{j=1}^N \pi_j(\x) \w_j
    = \sum_{\ell = 0}^d \lambda_\ell(\x) \q_\ell
    = \x.
  \end{equation*}
  Since $\lambda_\ell$ is affine, equals $1$ at $\q_\ell$, and vanishes on the opposite face, we have
  \begin{equation*}
    \norme{\gr\lambda_\ell}
    = \frac{1}{\operatorname{dist}(\q_\ell,\operatorname{aff}\{\q_j:j\ne\ell\})}
    \le \frac{2^{d-1} D_\QQ^{d-1}}{\rho^d}.
  \end{equation*}
  Since $\norm{\balpha^{\q_\ell}}_1 = 1$, the linear part of $\pi$ on $\Delta$ satisfies, for every $\v \in \reals^d$,
  \begin{equation*}
    \norm{\sum_{\ell = 0}^d (\gr\lambda_\ell)^\trs \v \, \balpha^{\q_\ell}}_1
    \le \frac{(d+1) 2^{d-1} D_\QQ^{d-1}}{\rho^d} \norme{\v}.
  \end{equation*}
  Subdividing the line segment between any two points of $\QQ$ at simplex boundaries and summing the bounds over the resulting subsegments proves \cref{eq:cert-barycentric-rho} globally.
\end{prf}

\begin{thm}[smoothed polytopes]
  \label{thm:cert-smoothed-polygon}
  Let $\QQ\subseteq\reals^d$ be a polytope, let $\BB$ be the Euclidean unit ball, and let $\eps\ge0$.
  Define the $\eps$-smoothed polytope
  \begin{equation*}
    \XX_\eps=\QQ+\eps\BB
    =\{\q+\eps\b:\q\in\QQ,\ \b\in\BB\}.
  \end{equation*}
  For every positive integer $T$, linear losses $f_t(\x)=\g_t^\trs\x$ with $\norme{\g_t}\le1$, and comparator path $\lbr{\u_t}_{t=1}^T\in\XX_\eps^T$, there is an exact cover on $[T]$ and $\XX_\eps$ satisfying
  \begin{equation}
    \label{eq:cert-smoothed-cover}
    R_T\le\sqrt{T\sbr{1+\frac{G_\QQ}{2}P_T}},
    \qquad \gap_T\le\eps T,
  \end{equation}
  where $G_\QQ$ is a barycentric Lipschitz coefficient from \cref{lem:cert-polyhedral-lift}.
  Consequently, every decision sequence on $\XX_\eps$ satisfying \cref{eq:oir-guarantees} with $\rho(n)=\sqrt n$ obeys
  \begin{equation*}
    \dreg_T(\lbr{\u_t}_{t=1}^T)
    \le A_T\sqrt{T\sbr{1+\frac{G_\QQ}{2}P_T}}+\eps T.
  \end{equation*}
  When $A_T$ is polylogarithmic in $T$ and the smoothing radius satisfies $\eps=\OO(\sqrt{(1+G_\QQ P_T)/T})$, the bound has order $\Ot(\sqrt{T(1+G_\QQ P_T)})$.
\end{thm}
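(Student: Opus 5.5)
Write each comparator as $\u_t=\q_t+\eps\b_t$ with $\q_t\in\QQ$ and $\b_t\in\BB$. For definiteness, take $\q_t=\Pi_\QQ[\u_t]$, so that $\norme{\u_t-\q_t}\le\eps$. The plan is to apply \cref{thm:cert-polyhedral-reduction} (via \cref{lem:cert-birth-death}) to the projected path $\lbr{\q_t}_{t=1}^T\in\QQ^T$. Running \cref{alg:cert-birth-death} on $\p_t=\pi(\q_t)$ produces an exact cover $\lbr{(\lambda_k,I_k,\z_k)}$ with vertex comparators $\z_k\in\QQ\subseteq\XX_\eps$. This is a legitimate cover on $[T]$ and $\XX_\eps$, and by \cref{eq:cert-barycenter-reconstruction} it satisfies $\sum_{k:t\in I_k}\lambda_k\z_k=\q_t$.

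For the interval complexity, \cref{eq:cert-birth-mass} gives $R_T^2\le T W_T$ with $W_T=1+\frac12\sum_{t\ge2}\norm{\pi(\q_t)-\pi(\q_{t-1})}_1\le1+\frac{G_\QQ}{2}\sum_{t\ge2}\norme{\q_t-\q_{t-1}}$. Since the Euclidean projection onto the convex set $\QQ$ is nonexpansive, $\norme{\q_t-\q_{t-1}}\le\norme{\u_t-\u_{t-1}}$. Hence the projected path length is at most $P_T$, which yields the bound on $R_T$ in \cref{eq:cert-smoothed-cover}. This nonexpansiveness is the step that makes projection the right choice of decomposition: an arbitrary decomposition $\u_t=\q_t+\eps\b_t$ could have a longer $\QQ$-path.

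For the approximation error, linearity and the exact-cover property give
\begin{equation*}
  \gap_T=\sum_{t=1}^T\g_t^\trs\sbr{\sum_{k:t\in I_k}\lambda_k\z_k-\u_t}=\sum_{t=1}^T\g_t^\trs(\q_t-\u_t)\le\sum_{t=1}^T\norme{\g_t}\norme{\q_t-\u_t}\le\eps T,
\end{equation*}
using $\norme{\g_t}\le1$ and $\norme{\u_t-\q_t}\le\eps$. Plugging $R_T$ and $\gap_T$ into \cref{eq:main-cover-costs} gives $\dreg_T\le A_T R_T+\gap_T$, which is the stated dynamic regret bound.

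For the final claim, substitute $\eps=\OO(\sqrt{(1+G_\QQ P_T)/T})$. Then $\eps T=\OO(\sqrt{T(1+G_\QQ P_T)})$, and $A_T$ is absorbed into $\Ot$.

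I expect no serious obstacle. The only points needing care are that the comparators $\z_k$ are vertices of $\QQ$ and therefore lie in $\XX_\eps$ (so the cover is valid on $\XX_\eps$), and the nonexpansiveness argument that controls the projected path length.
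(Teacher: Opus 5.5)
Your proposal is correct and follows the paper's proof essentially step for step. Like the paper, you project each $\u_t$ onto $\QQ$, use nonexpansiveness to bound the projected path length by $P_T$, and run the Luo--Schapire cover on the projected path, whose vertex comparators lie in $\QQ\subseteq\XX_\eps$. The bound $\gap_T=\sum_t\g_t^\trs(\q_t-\u_t)\le\eps T$ then follows by linearity, and \cref{eq:main-cover-costs} gives the regret bound.
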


\begin{prf}[Proof of \cref{thm:cert-smoothed-polygon}]
  Project each comparator onto $\QQ$: set $\q_t=\Pi_\QQ[\u_t]$.
  By the definition of $\XX_\eps$, $\norme{\q_t-\u_t}\le\eps$.
  Euclidean projection is nonexpansive, so
  \begin{equation*}
    \sum_{t=2}^T\norme{\q_t-\q_{t-1}}
    \le\sum_{t=2}^T\norme{\u_t-\u_{t-1}}=P_T.
  \end{equation*}
  Apply \cref{alg:cert-birth-death} to the projected path with a barycentric lift of $\QQ$.
  Its interval comparators lie in $\QQ\subseteq\XX_\eps$, and its barycenter at time $t$ is $\q_t$.
  The interval complexity therefore satisfies the first inequality in \cref{eq:cert-smoothed-cover}, while linearity gives
  \begin{equation*}
    \gap_T
    =\sum_{t=1}^T\g_t^\trs(\q_t-\u_t)
    \le\sum_{t=1}^T\norme{\q_t-\u_t}
    \le\eps T.
  \end{equation*}
  The regret consequence follows from \cref{eq:main-cover-costs}.
\end{prf}

\section{OIR interface and complete proofs}
\label{sec:algorithm-details}

This appendix proves the algorithmic reduction in \cref{sec:main-reduction}.
The OIR guarantees apply to the surrogate losses on the enlarged domain, while instantaneous regret transfer yields ODR for the original losses with decisions in $\XX$.
The proofs combine domain conversion, control of approximation error by out-of-domain interval comparators, and the Baby--Wang hindsight partition, following \cref{sec:reduction-key-analyses}.

\subsection{Proof of \texorpdfstring{\cref{thm:algorithm-reduction}}{Theorem~\ref{thm:algorithm-reduction}}}
\label{sec:first-order-guarantee}

\begin{prf}[Proof of \cref{thm:algorithm-reduction}]
  Fix a dynamic comparator sequence $\lbr{\u_t}_{t=1}^T\in\XX^T$.
  By \cref{lem:surrogate-regularity}, the surrogate losses meet the loss assumptions of the OIR interface formalized in \cref{dfn:first-order-algorithm}.
  Below, $I_j$ denotes a hindsight partition interval, $I_{j,s}$ its $s$-th piece, and $\z_{j,s}\in\YY$ the interval comparator on that piece.
  For every $t \in I_{j,s}$,
  \cref{eq:main-transfer} and addition and subtraction of $h_t(\z_{j,s})$ give
  \begin{equation*}
    f_t(\x_t) - f_t(\u_t)
    \le h_t(\y_t) - h_t(\z_{j,s}) + h_t(\z_{j,s}) - h_t(\u_t).
  \end{equation*}
  Summing over rounds within each piece, then over pieces and hindsight intervals, gives \cref{eq:main-regret-decomposition}.
  Following \cref{sec:main-aggregation} finishes the proof.
\end{prf}

\paragraph{OIR algorithm interface.}
The input algorithm predicts on $\YY=\BB(D_\XX)$ and receives the full surrogate loss from \cref{eq:surrogate-convex,eq:surrogate-exp-concave,eq:surrogate-strongly-convex} after its prediction, as specified in \cref{alg:main-reduction}.
Existing interval algorithms provide this interface for convex losses \citep{ICML15:Daniely-adaptive,AISTATS17:coin-betting-adaptive} and all three classes \citep{ICML18:zhang-dynamic-adaptive,NIPS21:dual-adaptive}.

\begin{dfn}[OIR algorithm interface for interval comparators]
  \label{dfn:first-order-algorithm}
  Fix a constant $D_\YY>0$ and a compact convex domain $\YY \subseteq \reals^d$ of diameter at most $D_\YY$.
  At each round $t\in[T]$, the algorithm $\AA$ returns $\y_t \in \YY$ and then receives the full differentiable convex loss $h_t : \YY \to \reals$.
  For an interval $I\subseteq[T]$ and an interval comparator $\z_I \in \YY$, define
  \begin{equation*}
    \ireg_I(\z_I) \triangleq \sum_{t \in I} h_t(\y_t) - \sum_{t \in I} h_t(\z_I).
  \end{equation*}
  Using the surrogate-loss notation of \cref{tab:loss-parameters}, fix a Lipschitz coefficient $G_h > 0$ and, in the curved modes, an exp-concavity coefficient $\alpha_h > 0$ or a strong-convexity coefficient $\lambda_h > 0$.
  The loss class and these parameters are specified at initialization and apply to every loss $h_t$.
  Simultaneously for every $I$ and $\z_I$, the OIR guarantees are
  \begin{center}
    \begin{tabular}{@{}cc@{}}
      \toprule
      Conditions on $h_t$ & Upper bound on $\ireg_I(\z_I)$ \\
      \midrule
      Convex and $G_h$-Lipschitz & $\Ot(\sqrt{|I|})$ \\
      Additionally $\alpha_h$-exp-concave & $\Ot(d\log|I|)$ \\
      Additionally $\lambda_h$-strongly convex & $\Ot(\log|I|)$ \\
      \bottomrule
    \end{tabular}
  \end{center}
  The hidden constants are independent of $d$, $T$, $I$, and $\z_I$; $\Ot$ hides polylogarithmic factors in $T$.
\end{dfn}

\begin{algorithm}[th]
  \caption{Hindsight partition by local path-length}
  \label{alg:partition}
  \begin{algorithmic}
    \REQUIRE Positive integers $T,d$, a sequence $\lbr{\u_t}_{t=1}^T\in(\reals^d)^T$, scale $B>0$, and exponent $\veps\ge0$.
    \ENSURE A partition of $[T]$ into intervals $\lbr{I_j}_{j=1}^K$.
    \STATE Set the current left endpoint $a = 1$ and initialize an empty list.
    \FOR{$t = 2$ to $T$}
    \IF{$P_{[a,t]} > B/(t-a+1)^\veps$}
    \STATE Append $[a,t-1]$ to the list and set $a = t$.
    \ENDIF
    \ENDFOR
    \STATE Append $[a,T]$ and denote the resulting intervals by $\lbr{I_j}_{j=1}^K$.
  \end{algorithmic}
\end{algorithm}

\begin{lem}[partition for first differences]
  \label{lem:first-order-partition}
  Let $B>0$ and $\veps\ge0$, and let $\lbr{\u_t}_{t=1}^T\in(\reals^d)^T$.
  With $P_I=\sum_{t=a+1}^b\norme{\u_t-\u_{t-1}}$ for $I=[a,b]$, \cref{alg:partition} returns a partition $\lbr{I_j}_{j=1}^K$ of $[T]$ satisfying, for every $j\in[K]$,
  \begin{equation}
    \label{eq:first-order-partition-local}
    P_{I_j} \le B/|I_j|^\veps.
  \end{equation}
  For nontrivial $P_T>0$,
  \begin{equation}
    \label{eq:first-order-partition-count}
    K \le \left\lceil (P_T/B)^{\frac{1}{1+\veps}} (2T)^{\frac{\veps}{1+\veps}} \right\rceil.
  \end{equation}
\end{lem}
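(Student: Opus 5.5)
The argument has two parts. The local bound \cref{eq:first-order-partition-local} follows from the invariant maintained by \cref{alg:partition}. The count bound \cref{eq:first-order-partition-count} comes from a lower bound on the path length consumed by each closed interval, combined with H\"older's inequality.

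\emph{Local bound.} Consider an interval $[a,b]$ returned by the algorithm. If $b=a$, then $P_{[a,a]}=0$ and there is nothing to prove. Otherwise, every test at $t=a+1,\dots,b$ failed to trigger a split, because a successful test would have closed the interval before reaching $b$. In particular, the test at $t=b$ failed, which reads $P_{[a,b]}\le B/(b-a+1)^\veps$. This is exactly \cref{eq:first-order-partition-local}. The argument applies equally to the closed intervals and to the final appended interval $[a,T]$.

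\emph{Count bound.} Let $I_1,\dots,I_{K-1}$ be the closed intervals, and write $I_j=[a_j,b_j]$ with $b_j+1=a_{j+1}$. Closing $I_j$ means the test at $t=b_j+1$ triggered, so
\[
P_{I_j}+\norme{\u_{b_j+1}-\u_{b_j}} = P_{[a_j,b_j+1]} > B/(|I_j|+1)^\veps \ge B/(2|I_j|)^\veps .
\]
Define $\ell_j=|I_j|+1$. The left-hand sides are sums over disjoint sets of consecutive differences: interval $j$ uses indices $a_j+1,\dots,b_j+1$, so distinct $j$ use distinct indices. Summing over $j$ therefore gives $\sum_{j<K} B\,\ell_j^{-\veps} < P_T$. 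The lengths satisfy $\sum_j \ell_j\le 2T$, since $|I_j|\ge 1$ and the intervals partition $[T]$. By H\"older's inequality, or equivalently convexity of $x\mapsto x^{-\veps}$ with Jensen,
\[
\sum_{j<K}\ell_j^{-\veps}\ \ge\ (K-1)\Big(\tfrac{2T}{K-1}\Big)^{-\veps}=(K-1)^{1+\veps}(2T)^{-\veps}.
\]
Combining the two inequalities yields $(K-1)^{1+\veps}<(P_T/B)(2T)^{\veps}$, that is,
\[
K-1<(P_T/B)^{\frac1{1+\veps}}(2T)^{\frac{\veps}{1+\veps}} .
\]
Since $K-1$ is an integer strictly below this quantity, $K-1\le\lceil(P_T/B)^{1/(1+\veps)}(2T)^{\veps/(1+\veps)}\rceil-1$, which is \cref{eq:first-order-partition-count}. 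If $K=1$, the bound holds trivially because $P_T>0$ makes the ceiling at least $1$.

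\emph{Main obstacle.} The step requiring care is the bookkeeping of the triggering differences. One must check that the extra difference $\norme{\u_{b_j+1}-\u_{b_j}}$ charged to interval $j$ is not also counted inside $P_{I_{j+1}}$. It is not, because $P_{I_{j+1}}$ starts at index $a_{j+1}+1=b_j+2$. This disjointness is what makes the summed path-length bound by $P_T$ valid.
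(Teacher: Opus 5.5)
Your proof is correct and follows the paper's argument essentially step for step. The local bound comes from the failed test at the right endpoint. The count bound comes from charging each of the first $K-1$ intervals with its outgoing increment, disjointly, and then applying Jensen to $x\mapsto x^{-\veps}$ with $\sum_{j<K}(|I_j|+1)\le 2T$; your explicit handling of the ceiling and of the case $K=1$ fills in details the paper leaves implicit.
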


\begin{prf}[Proof of \cref{lem:first-order-partition}]
  Starting at $a = 1$, close the current interval immediately before the first $t$ for which $P_{[a, t]} > B/(t - a + 1)^\veps$, then restart at $t$.
  This proves \cref{eq:first-order-partition-local}.

  Each of the first $K-1$ cuts counts the path increments in the completed interval and its outgoing step.
  No increment is counted twice, so
  \begin{equation*}
    P_T
    > B \sum_{j=1}^{K-1} (|I_j| + 1)^{-\veps}
    \ge B \frac{(K-1)^{1+\veps}}{(2 T)^\veps},
  \end{equation*}
  giving \cref{eq:first-order-partition-count}.
  The last step is Jensen's inequality, using $\sum_{j=1}^{K-1}(|I_j|+1)\le T+K-2<2T$.
\end{prf}

\subsection{Proof of \texorpdfstring{\cref{thm:pointwise-transfer}}{Theorem~\ref{thm:pointwise-transfer}}}
\label{sec:first-order-transfer}

\begin{prf}[Proof of \cref{thm:pointwise-transfer}]
  Fix $\y_t\in\YY$ and $\u_t\in\XX$.
  Construct $\x_t$ and $\d_t$ as in \cref{alg:main-reduction}, and use the applicable surrogate loss in \cref{eq:surrogate-convex,eq:surrogate-exp-concave,eq:surrogate-strongly-convex}, with $D_\YY=2D_\XX$.
  By \cref{lem:projection-correction}, $\norme{\d_t}\le G$ and
  $\g_t^\trs(\x_t-\u_t)\le\d_t^\trs(\y_t-\u_t)$.
  Together with convexity, this proves the convex case.

  For exp-concave losses, the definition of $\gamma$ in \cref{eq:main-gamma} satisfies
  $0<\gamma\le\frac12\min\{\alpha,1/(D_\XX G)\}$.
  Apply \cref{lem:exp-first-order} on $\XX$ and set $A = \g_t^\trs (\x_t - \u_t)$ and $B = \d_t^\trs (\y_t - \u_t)$.
  Then $A \le B$ and $B \le (D_\XX + D_\YY) G/2 \le 1/\gamma$.
  Since $a \mapsto a - \gamma a^2/2$ is nondecreasing up to $1/\gamma$,
  \begin{equation*}
    f_t(\x_t) - f_t(\u_t)
    \le A - \frac{\gamma}{2} A^2
    \le B - \frac{\gamma}{2} B^2 = h_t(\y_t) - h_t(\u_t).
  \end{equation*}
  Finally, strong convexity and \cref{lem:projection-correction} give
  \begin{equation*}
    \begin{aligned}
      f_t(\x_t) - f_t(\u_t)
      &\le \g_t^\trs (\x_t-\u_t) - \frac{\lambda}{2} \norme{\x_t-\u_t}^2\\
      &\le \d_t^\trs (\y_t-\u_t) + \frac{\lambda}{2} \norme{\y_t-\x_t}^2 - \frac{\lambda}{2} \norme{\u_t-\x_t}^2
      = h_t(\y_t) - h_t(\u_t).
    \end{aligned}
  \end{equation*}
\end{prf}

\begin{lem}[projection correction {\citep{ICML20:Ashok}}]
  \label{lem:projection-correction}
  Let $\XX\subseteq\reals^d$ be closed and convex, and let $(\y_t,\g_t)\in(\reals^d)^2$ with $\norme{\g_t}\le G$ for some $G\ge0$.
  Define $\x_t$, $\n_t$, and $\d_t$ by \cref{alg:main-reduction}.
  The corrected gradient satisfies, for every $\u\in\XX$,
  \begin{equation*}
    \norme{\d_t} \le \norme{\g_t} \le G, \qquad
    \g_t^\trs (\x_t - \u) \le \d_t^\trs (\y_t - \u).
  \end{equation*}
\end{lem}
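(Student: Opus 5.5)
The plan is to prove the two claims separately, both from the variational characterization of the projection. Write $\n_t=\y_t-\x_t$ with $\x_t=\Pi_\XX[\y_t]$. Since $\XX$ is closed and convex, the projection optimality condition gives
\[
\n_t^\trs(\u-\x_t)\le 0 \quad\text{for every } \u\in\XX .
\]
If $\n_t=\bm0$, then $\d_t=\g_t$ and $\y_t=\x_t$, so both claims are trivial. I therefore assume $\n_t\neq\bm0$ and set $\beta=\max\{-\g_t^\trs\n_t,0\}/\norme{\n_t}^2\ge0$, so that $\d_t=\g_t+\beta\n_t$.

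For the norm bound, I split into two cases on the sign of $\g_t^\trs\n_t$. If $\g_t^\trs\n_t\ge0$, then $\beta=0$ and $\d_t=\g_t$. Otherwise $\beta=-\g_t^\trs\n_t/\norme{\n_t}^2$, and $\d_t=\g_t-\frac{\g_t^\trs\n_t}{\norme{\n_t}^2}\n_t$ is the orthogonal projection of $\g_t$ onto the complement of $\n_t$. Pythagoras then gives $\norme{\d_t}^2=\norme{\g_t}^2-(\g_t^\trs\n_t)^2/\norme{\n_t}^2\le\norme{\g_t}^2$, and $\norme{\g_t}\le G$ completes the chain.

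For the inequality, I expand using $\y_t=\x_t+\n_t$:
\[
\d_t^\trs(\y_t-\u)=\g_t^\trs(\x_t-\u)+\g_t^\trs\n_t+\beta\,\n_t^\trs(\x_t-\u)+\beta\norme{\n_t}^2 .
\]
It suffices to show that the last three terms sum to something nonnegative. The term $\beta\,\n_t^\trs(\x_t-\u)$ is nonnegative by the projection condition and $\beta\ge0$. For the remaining terms, $\g_t^\trs\n_t+\beta\norme{\n_t}^2=\g_t^\trs\n_t+\max\{-\g_t^\trs\n_t,0\}=\max\{\g_t^\trs\n_t,0\}\ge0$.

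The only real subtlety is this sign bookkeeping: the correction must exactly cancel a negative $\g_t^\trs\n_t$ while the projection term supplies the right sign. Once the expansion is written as above, everything is routine. Compactness of $\XX$ is not needed; closedness and convexity suffice for the projection to be well defined and for the optimality condition to hold.
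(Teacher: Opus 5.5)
Your proof is correct and follows essentially the same route as the paper. Both arguments use the projection optimality condition $\n_t^\trs(\u-\x_t)\le 0$ to write $\d_t^\trs(\y_t-\u)-\g_t^\trs(\x_t-\u)$ as $\max\{\g_t^\trs\n_t,0\}$ plus a nonnegative projection term, and both note that when $\g_t^\trs\n_t<0$ the correction removes the component of $\g_t$ along $\n_t$; your Pythagoras computation just spells out the norm bound that the paper states in one line.
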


\begin{prf}[Proof of \cref{lem:projection-correction}]
  We give the argument of \citet{ICML20:Ashok} for a fixed domain in our notation.
  Fix $\u_t \in \XX$.
  Projection optimality gives
  \begin{equation*}
    \n_t^\trs (\u_t - \x_t) \le 0.
  \end{equation*}
  If $\n_t = \bm0$, then $\d_t = \g_t$ and the claim is trivial; otherwise,
  \begin{equation*}
    \begin{aligned}
      \d_t^\trs (\y_t - \u_t) - \g_t^\trs (\x_t - \u_t)
      &= \g_t^\trs \n_t + \max\lbr{-\g_t^\trs \n_t, 0} \sbr{1 + \frac{\n_t^\trs (\x_t - \u_t)}{\norme{\n_t}^2}}
      \ge 0.
    \end{aligned}
  \end{equation*}
  If $\g_t^\trs \n_t < 0$, the correction removes the component of $\g_t$ parallel to $\n_t$.
  Thus $\norme{\d_t} \le \norme{\g_t} \le G$.
\end{prf}

\begin{lem}[surrogate regularity]
  \label{lem:surrogate-regularity}
  Under the hypotheses and construction of \cref{thm:pointwise-transfer}, use the applicable parameters in \cref{tab:loss-parameters}.

  On $\YY$, the loss $h_t$ is linear and $G_h$-Lipschitz in the convex mode;
  $\alpha_h$-exp-concave, $G_h$-Lipschitz, and $H_h$-smooth in the exp-concave mode;
  and $\lambda_h$-strongly convex, $G_h$-Lipschitz, and $H_h$-smooth in the strongly convex mode.

  In either curved mode, for every dynamic comparator sequence $\lbr{\u_t}_{t=1}^T\in\XX^T$, define the gradient-shifted centers $\c_t \triangleq \u_t - H_h^{-1} \gr h_t(\u_t)$.
  Then $\c_t\in\YY$.
\end{lem}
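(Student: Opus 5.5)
The plan is to verify each claimed property directly from the closed forms in \cref{eq:surrogate-convex,eq:surrogate-exp-concave,eq:surrogate-strongly-convex}. The only outside input is \cref{lem:projection-correction}, which gives $\norme{\d_t}\le G$. Two geometric facts about $\YY=\BB(D_\XX)$ will be used throughout:
\begin{itemize}
\item for $\y,\y'\in\YY$ we have $\norme{\y-\y'}\le 2D_\XX=D_\YY$;
\item for $\y\in\YY$ and $\x\in\XX\subseteq\BB(D_\XX/2)$ we have $\norme{\y-\x}\le \tfrac{3}{2}D_\XX=(D_\XX+D_\YY)/2$.
\end{itemize}

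\textbf{Convex and strongly convex modes.} In the convex mode, $h_t$ is linear with gradient $\d_t$, so it is $G$-Lipschitz by \cref{lem:projection-correction}.

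In the strongly convex mode,
\begin{equation*}
\gr h_t(\y)=\d_t+\lambda(\y-\x_t), \qquad \gr^2 h_t=\lambda I .
\end{equation*}
This gives $\lambda$-strong convexity and $\lambda$-smoothness. The second geometric fact bounds the gradient norm by $G+\lambda(D_\XX+D_\YY)/2=G_h$. For smoothness, note $G_h\ge \tfrac32\lambda D_\XX$, hence $H_h=2G_h/D_\XX\ge 3\lambda\ge\lambda$.

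\textbf{Exp-concave mode.} Write $\beta(\y)=\d_t^\trs(\y-\y_t)$. Then
\begin{equation*}
\gr h_t(\y)=\sbr{1+\gamma\beta(\y)}\d_t, \qquad \gr^2 h_t=\gamma\,\d_t\d_t^\trs .
\end{equation*}
Since $\y,\y_t\in\YY$, we get $|\beta(\y)|\le GD_\YY$, so $\norme{\gr h_t}\le(1+\gamma D_\YY G)G=G_h$.

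For exp-concavity I will use the standard twice-differentiable criterion: $\gr^2 h\succeq\alpha_h\gr h\gr h^\trs$ on $\YY$ implies $\exp(-\alpha_h h)$ is concave. Here $\gr h\gr h^\trs=(1+\gamma\beta)^2\d_t\d_t^\trs\preceq(1+\gamma D_\YY G)^2\d_t\d_t^\trs$. The criterion therefore holds with $\alpha_h=\gamma/(1+\gamma D_\YY G)^2$.

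For smoothness, $\normop{\gr^2h_t}\le\gamma G^2$. Using $\gamma\le 1/(2D_\XX G)$ from \cref{eq:main-gamma}, this is at most $G/(2D_\XX)\le 2G_h/D_\XX=H_h$.

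\textbf{Centers.} For $\u_t\in\XX$,
\begin{equation*}
\norme{\c_t}\le\norme{\u_t}+H_h^{-1}\norme{\gr h_t(\u_t)}\le D_\XX/2+\frac{D_\XX}{2G_h}\,G_h=D_\XX .
\end{equation*}
Hence $\c_t\in\YY$. This step needs the gradient bound at $\u_t$, which is valid because $\XX\subseteq\YY$.

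\textbf{Expected obstacle.} The main point of care is the exp-concavity verification. The constant must hold uniformly over the enlarged domain $\YY$ rather than $\XX$, which is why $D_\YY$ rather than $D_\XX$ enters $\alpha_h$. The argument also relies on the Hessian criterion, which applies since $h_t$ is a smooth quadratic. All remaining steps are direct norm bounds.
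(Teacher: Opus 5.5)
Your proposal is correct and follows the paper's proof essentially step by step. It uses the same closed-form gradients and Hessians, the same scalar bound $|1+\gamma\d_t^\trs(\y-\y_t)|\le 1+\gamma D_\YY G$ for both the Lipschitz constant and the criterion $\alpha_h\gr h_t\gr h_t^\trs\preceq\gr^2 h_t$, the same check that $H_h=2G_h/D_\XX$ dominates the Hessian norm, and the same bound $\norme{\c_t}\le D_\XX/2+G_h/H_h=D_\XX$ (the paper additionally notes $\alpha_h D_\YY G_h<1$ to license $\gamma_h=\alpha_h/2$, which the stated lemma does not require).
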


\begin{prf}[Proof of \cref{lem:surrogate-regularity}]
  In the convex mode, the linear surrogate has gradient $\d_t$ and zero Hessian.
  By \cref{lem:projection-correction},
  \begin{equation*}
    \norme{\gr h_t(\y)} = \norme{\d_t} \le G = G_h, \qquad
    \normop{\gr^2 h_t(\y)} = 0.
  \end{equation*}

  In the exp-concave mode, $\gr h_t(\y) = [1 + \gamma \d_t^\trs (\y - \y_t)] \d_t$ and $\gr^2 h_t(\y) = \gamma \d_t \d_t^\trs$.
  The definition of $\gamma$ in \cref{eq:main-gamma} gives $\gamma\le1/(2D_\XX G)$, and $G_h>G>0$, so
  \begin{equation*}
    \gamma G^2 \le \frac{G}{2D_\XX} < \frac{2G_h}{D_\XX} = H_h.
  \end{equation*}
  Thus $\max\lbr{\gamma G^2,2G_h/D_\XX}=2G_h/D_\XX$, which justifies the choice of $H_h$.
  Since $|1 + \gamma \d_t^\trs (\y - \y_t)| \le 1 + \gamma D_\YY G$, the definitions give
  \begin{equation*}
    \norme{\gr h_t(\y)} \le G_h, \qquad
    \normop{\gr^2 h_t(\y)} \le H_h.
  \end{equation*}
  The same scalar bound gives $\alpha_h \gr h_t(\y)\gr h_t(\y)^\trs \preceq \gamma\d_t\d_t^\trs=\gr^2h_t(\y)$, proving $\alpha_h$-exp-concavity.
  Moreover, the parameters satisfy $\alpha_h D_\YY G_h = \frac{\gamma D_\YY G}{1 + \gamma D_\YY G} < 1$.
  Hence $\min\lbr{\alpha_h,1/(D_\YY G_h)}=\alpha_h$, and \cref{lem:exp-first-order} applies on $\YY$ with $\gamma_h=\alpha_h/2$.

  In the strongly convex mode, $\gr h_t(\y) = \d_t + \lambda(\y - \x_t)$ and $\gr^2 h_t(\y) = \lambda_h I_d$.
  Here
  \begin{equation*}
    \frac{2G_h}{D_\XX} = \frac{2G}{D_\XX} + \lambda\sbr{1+\frac{D_\YY}{D_\XX}} > \lambda,
  \end{equation*}
  so $\max\lbr{\lambda,2G_h/D_\XX}=2G_h/D_\XX=H_h$.
  Therefore
  \begin{equation*}
    \norme{\gr h_t(\y)} \le G_h, \qquad
    \normop{\gr^2 h_t(\y)} \le H_h.
  \end{equation*}

  Finally, $\norme{\c_t} \le D_\XX / 2 + G_h / H_h = D_\XX = D_\YY/2$ by $H_h=2G_h/D_\XX$.
\end{prf}

\begin{lem}[quadratic exp-concavity support \citep{journals/ml/HazanAK07}]
  \label{lem:exp-first-order}
  Let $\alpha,G,D_\XX>0$.
  Let $f$ be differentiable, $\alpha$-exp-concave, and $G$-Lipschitz on a convex domain in $\reals^d$ of diameter at most $D_\XX$.
  If $0 < \gamma \le \frac{1}{2} \min\{\alpha, 1/(D_\XX G)\}$, then for every pair $(\x,\u)$ in the domain,
  \begin{equation*}
    f(\x) - f(\u)
    \le A - \frac{\gamma}{2} A^2,
    \qquad A \triangleq \gr f(\x)^\trs (\x - \u).
  \end{equation*}
\end{lem}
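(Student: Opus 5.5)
The plan is the classical argument of \citet{journals/ml/HazanAK07}: pass to a smaller exp-concavity parameter, linearize the concave function $\exp(-2\gamma f)$, and then use a scalar logarithm inequality.

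\emph{Step 1: reduce the exp-concavity parameter.} Since $2\gamma\le\alpha$, the function $f$ is also $2\gamma$-exp-concave. Indeed, $\exp(-2\gamma f)=\big(\exp(-\alpha f)\big)^{2\gamma/\alpha}$, and $t\mapsto t^{2\gamma/\alpha}$ is concave and nondecreasing on $\reals_+$. Composing it with a positive concave function keeps concavity. So $g(\x)\triangleq\exp(-2\gamma f(\x))$ is concave and differentiable on the domain.

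\emph{Step 2: apply the first-order condition for concavity.} Concavity of $g$ gives $g(\u)\le g(\x)+\gr g(\x)^\trs(\u-\x)$. Using $\gr g(\x)=-2\gamma\, g(\x)\gr f(\x)$, this reads
\begin{equation*}
  \exp(-2\gamma f(\u)) \le \exp(-2\gamma f(\x))\,\sbr{1+2\gamma A},
  \qquad A=\gr f(\x)^\trs(\x-\u).
\end{equation*}
By Cauchy--Schwarz, Lipschitzness and the diameter bound, $|A|\le GD_\XX$. Together with $2\gamma\le 1/(D_\XX G)$ this gives $z\triangleq 2\gamma A\in[-1,1]$.

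\emph{Step 3: take logarithms.} If $z=-1$, the right-hand side of the display is $0$ while the left-hand side is positive, which is impossible. So $z>-1$, and taking logarithms yields
\begin{equation*}
  f(\x)-f(\u)\le \frac{1}{2\gamma}\log(1+z).
\end{equation*}

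\emph{Step 4: scalar inequality.} The remaining ingredient is $\log(1+z)\le z-z^2/4$ for $z\in(-1,1]$. To prove it, set $\phi(z)=z-z^2/4-\log(1+z)$. Then $\phi(0)=0$ and
\begin{equation*}
  \phi'(z)=1-\frac{z}{2}-\frac{1}{1+z}=\frac{z(1-z)}{2(1+z)},
\end{equation*}
which is $\le0$ on $(-1,0]$ and $\ge0$ on $[0,1]$. Hence $\phi\ge0$ on $(-1,1]$.

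Substituting $z=2\gamma A$ gives
\begin{equation*}
  \frac{1}{2\gamma}\sbr{2\gamma A-\gamma^2A^2}=A-\frac{\gamma}{2}A^2,
\end{equation*}
which is the claim.

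The only delicate point is the range condition $|z|\le1$ in Step 4, which is where the hypothesis $\gamma\le 1/(2D_\XX G)$ is used; the rest is routine.
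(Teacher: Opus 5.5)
Your proof is correct and follows the paper's argument step for step. Both pass to $2\gamma$-exp-concavity via the concave nondecreasing power map, use the tangent inequality to obtain $1+2\gamma A>0$ and $f(\x)-f(\u)\le(2\gamma)^{-1}\log(1+2\gamma A)$, and then apply $\log(1+a)\le a-a^2/4$ on $(-1,1]$ using $|2\gamma A|\le 1$. Your only addition is an explicit verification of the scalar logarithm inequality, which the paper states without proof.
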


\begin{prf}[Proof of \cref{lem:exp-first-order}]
  Set $\bar\alpha = 2 \gamma \le \alpha$.
  Since $a \mapsto a^{\bar\alpha / \alpha}$ is increasing and concave on $\reals_+$, the function $\exp(-\bar\alpha f)$ is concave.
  Its tangent inequality at $\x$ gives
  \begin{equation*}
    0
    < \exp(-\bar\alpha f(\u))
    \le \exp(-\bar\alpha f(\x))\sbr{1+\bar\alpha A}.
  \end{equation*}
  Thus $1+\bar\alpha A>0$ and
  $f(\x)-f(\u)\le\bar\alpha^{-1}\log(1+\bar\alpha A)$.
  The Lipschitz and diameter bounds give $|A|\le GD_\XX$, so $|\bar\alpha A|\le1$.
  Applying $\log(1+a)\le a-a^2/4$ for $-1<a\le1$ and using $\bar\alpha=2\gamma$ proves the claim.
\end{prf}

\subsection{Proof of \texorpdfstring{\cref{thm:improper-gap}}{Theorem~\ref{thm:improper-gap}}}
\label{sec:first-order-i2d}

\begin{prf}[Proof of \cref{thm:improper-gap}]
  Set $I=[n]$ and $D_\YY=2D_\XX$, and use the notation $\gap_I$ from \cref{sec:main-hindsight}.

  \paragraph{Part (i): linear losses.}
  If $n = 1$, take $\z_1 = \u_1$, which gives $\gap_I(\z_1) = 0$.
  Suppose henceforth that $n \ge 2$, and set
  \begin{equation*}
    \D_0 = \bm0, \qquad
    \D_t = \sum_{s=1}^t \d_s, \qquad
    m \in \argmax_{1\le t\le n-1} \norme{\D_t}, \qquad
    M = \norme{\D_m}.
  \end{equation*}
  Since $\d_t = \D_t - \D_{t-1}$, summation by parts gives
  \begin{align*}
    \sum_{t=1}^n \d_t^\trs (\u_n - \u_t)
    = \sum_{t=1}^{n-1} (\D_t - \D_{t-1})^\trs (\u_n - \u_t)
    = \sum_{t=1}^{n-1} \D_t^\trs (\u_{t+1}-\u_t).
  \end{align*}
  Therefore
  \begin{equation}
    \label{eq:first-order-linear-abel}
    \abs{\sum_{t=1}^n \d_t^\trs (\u_n - \u_t)}
    \le \sum_{t=1}^{n-1} \norme{\D_t} \norme{\u_{t+1} - \u_t}
    \le M P_I.
  \end{equation}
  Thus every path increment is weighted by a prefix sum of norm at most $M$.

  If $M = 0$, take a single $\z_1 = \u_n$.
  By \cref{eq:first-order-linear-abel}, $\gap_I(\z_1) = 0$.
  If $M > 0$, split at $m$ and shift the first comparator against $\D_m$:
  \begin{equation*}
    I_1 = [1, m], \qquad
    I_2 = [m+1, n], \qquad
    \z_1 = \u_n - \frac{P_I}{M} \D_m, \qquad
    \z_2 = \u_n.
  \end{equation*}
  Since $\sum_{t=1}^m \d_t = \D_m$ and $\norme{\D_m} = M$, the corresponding approximation error satisfies
  \begin{align*}
    \gap_I(\z_1, \z_2)
    = \sum_{t=1}^n \d_t^\trs (\u_n-\u_t) - \frac{P_I}{M} \D_m^\trs \D_m
    \le \abs{\sum_{t=1}^n \d_t^\trs (\u_n-\u_t)} - M P_I
    \le 0,
  \end{align*}
  where the last step is \cref{eq:first-order-linear-abel}.

  In every case, each interval comparator satisfies $\norme{\z_s} \le \norme{\u_n} + P_I \le D_\XX = D_\YY/2$ and hence belongs to $\YY$.

  \paragraph{Part (ii): smooth losses.}
  Let $H_h>0$ be the smoothness coefficient and define the gradient-shifted centers
  \begin{equation*}
    \c_t \triangleq \u_t - H_h^{-1} \gr h_t(\u_t).
  \end{equation*}
  We note that \citet{COLT21:baby-exp-concave,AISTATS22:sc-proper} use similar centers in their proofs.
  By the condition of this theorem, each $\c_t \in \YY$.
  Each interval comparator below is the mean of the centers on its piece, so
  \begin{equation*}
    \norme{\z_s} \le |I_s|^{-1}\sum_{t \in I_s}\norme{\c_t} \le D_\YY/2, \qquad \z_s \in \YY.
  \end{equation*}

  If $n = 1$, take $I_1 = [1]$ and $\z_1 = \c_1$.
  Since $\gr h_1(\u_1) = H_h(\u_1-\c_1)$,
  smoothness gives
  \begin{align*}
    \gap_{[1]}(\z_1)
    = h_1(\c_1) - h_1(\u_1)
    &\le \gr h_1(\u_1)^\trs (\c_1 - \u_1) + \frac{H_h}{2} \norme{\c_1 - \u_1}^2\\
    &= -H_h \norme{\u_1-\c_1}^2 + \frac{H_h}{2} \norme{\u_1-\c_1}^2
    = -\frac{H_h}{2} \norme{\u_1-\c_1}^2
    \le 0.
  \end{align*}
  Suppose henceforth that $n \ge 2$,
  and let
  \begin{equation*}
    \bar\c = \frac{1}{n} \sum_{t=1}^n \c_t, \qquad
    \r_t = \c_t - \bar\c, \qquad
    \R_t = \sum_{s=1}^t \r_s, \qquad
    m \in \argmax_{1\le t\le n-1} \norme{\R_t}, \qquad
    M = \norme{\R_m}.
  \end{equation*}
  Split at $m$ and use the mean center on each piece:
  \begin{equation*}
    I_1 = [1, m], \qquad
    I_2 = [m+1, n], \qquad
    \z_s = \frac{1}{|I_s|} \sum_{t \in I_s} \c_t, \qquad
    \bdelta_s = \z_s - \bar\c.
  \end{equation*}
  By definition,
  \begin{equation*}
    \R_n = \sum_{t=1}^n \r_t = \bm0, \qquad
    \bdelta_1 = \frac{\R_m}{m}, \qquad
    \bdelta_2 = - \frac{\R_m}{n-m}; \qquad
    \sum_{t \in I_s} (\z_s - \c_t) = \bm0.
  \end{equation*}

  For $t \in I_s$, the identity $\gr h_t(\u_t) = H_h(\u_t-\c_t)$ gives
  \begin{align*}
    h_t(\z_s) - h_t(\u_t)
    \le \gr h_t(\u_t)^\trs (\z_s-\u_t) + \frac{H_h}{2} \norme{\z_s-\u_t}^2
    = \frac{H_h}{2} \sbr{ \norme{\z_s-\c_t}^2 - \norme{\u_t-\c_t}^2 }.
  \end{align*}
  Consequently,
  \begin{equation}
    \begin{aligned}
      \label{eq:first-order-smooth-energy}
      \sbr{\frac{H_h}{2}}^{-1} \gap_{[n]}(\z_1, \z_2)
      &\leq \sum_{s=1}^2 \sum_{t \in I_s} \norme{\z_s-\c_t}^2 - \sum_{t=1}^n \norme{\u_t-\c_t}^2 \\
      &= \sbr{ \sum_{t=1}^n \norme{\r_t}^2 - \sum_{s=1}^2 |I_s| \norme{\bdelta_s}^2 } - \sum_{t=1}^n \sbr{ \norme{\u_t-\bar\c}^2 + \norme{\r_t}^2 - 2 \r_t^\trs \u_t } \\
      &= - \sum_{s=1}^2 |I_s| \norme{\bdelta_s}^2 - \sum_{t=1}^n \norme{\u_t-\bar\c}^2 + 2 \sum_{t=1}^n \r_t^\trs \u_t,
    \end{aligned}
  \end{equation}
  where
  \begin{align}
    \sum_{s=1}^2 |I_s| \norme{\bdelta_s}^2
    = \sbr{\frac{1}{m} + \frac{1}{n-m}} M^2
    = \frac{n}{m(n-m)} M^2
    \ge \frac{4}{n} M^2.
    \label{eq:first-order-smooth-mean-energy}
  \end{align}
  With $\R_0 = \R_n = \bm0$ and $\r_t = \R_t - \R_{t-1}$, summation by parts gives
  \begin{align}
    \sum_{t=1}^n \r_t^\trs \u_t
    = \sum_{t=1}^n (\R_t-\R_{t-1})^\trs \u_t
    = \sum_{t=1}^{n-1} \R_t^\trs (\u_t-\u_{t+1})
    \le M P_{[n]}.
    \label{eq:first-order-smooth-abel}
  \end{align}
  Dropping the nonpositive middle term in \cref{eq:first-order-smooth-energy},
  then using \cref{eq:first-order-smooth-mean-energy,eq:first-order-smooth-abel}, gives
  \begin{align*}
    \sbr{\frac{H_h}{2}}^{-1} \gap_{[n]}(\z_1, \z_2)
    \le - \frac{4}{n} M^2 + 2 M P_{[n]}
    = \frac{n}{4} P_{[n]}^2 - \frac{4}{n} \sbr{M - \frac{n}{4} P_{[n]}}^2
    \le \frac{n}{4} P_{[n]}^2.
  \end{align*}
\end{prf}

\subsection{Extending \texorpdfstring{\cref{thm:algorithm-reduction}}{Theorem~\ref{thm:algorithm-reduction}} to interval dynamic regret}
\label{sec:interval-dynamic-proof}

Interval dynamic regret measures regret against a changing comparator on each interval \citep{AISTATS20:Zhang,ICML20:Ashok,JMLR'25:efficient,arXiv26:gradient-variation-interval}.
Our wrapper, \cref{alg:main-reduction}, already guarantees the following rates simultaneously on all intervals.

\begin{cor}[dynamic regret on every interval]
  \label{cor:interval-dynamic}
  Under the assumptions and with the same run as in \cref{thm:algorithm-reduction}, simultaneously for every $I=[a,b]\subseteq [T]$ and every comparator path $\lbr{\u_t}_{t\in I}\in\XX^{|I|}$,
  \begin{equation*}
    \idreg_I(\lbr{\u_t}_{t\in I}) \triangleq \sum_{t\in I} f_t(\x_t) - \sum_{t\in I} f_t(\u_t)
  \end{equation*}
  satisfies the following bounds, where $P_I=\sum_{t=a+1}^b\norme{\u_t-\u_{t-1}}$.
  \begin{center}
    \begin{tabular}{@{}cc@{}}
      \toprule
      Loss class & Upper bound on $\idreg_I(\lbr{\u_t}_{t\in I})$ \\
      \midrule
      Convex & $\Ot\sbr{\sqrt{|I|}+\sqrt{P_I|I|}}$ \\
      Exp-concave & $\Ot\sbr{d+d^{\frac{2}{3}}P_I^{\frac{2}{3}}|I|^{\frac{1}{3}}}$ \\
      Strongly convex & $\Ot\sbr{1+P_I^{\frac{2}{3}}|I|^{\frac{1}{3}}}$ \\
      \bottomrule
    \end{tabular}
  \end{center}
\end{cor}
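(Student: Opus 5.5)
The plan is to rerun the proof of \cref{thm:algorithm-reduction} with the horizon $[T]$ replaced by an arbitrary interval $I=[a,b]$. The one thing to check is that every ingredient used there is either per-round or per-subinterval. Nothing in the wrapper depends on the comparator path, since the algorithm never sees it. We therefore fix a single run of \cref{alg:main-reduction} and argue in hindsight for each pair consisting of an interval $I$ and a path $\lbr{\u_t}_{t\in I}\in\XX^{|I|}$.

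\textbf{Step 1: per-round transfer.} First apply \cref{thm:pointwise-transfer} at every $t\in I$. It holds for every pair $(\y_t,\x_t)$ and every $\u_t\in\XX$, so
\begin{equation*}
  \idreg_I(\lbr{\u_t}_{t\in I})\le\sum_{t\in I}\sbr{h_t(\y_t)-h_t(\u_t)}.
\end{equation*}

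\textbf{Step 2: hindsight partition of $I$.} Next run \cref{alg:partition} on the reindexed sequence $\lbr{\u_t}_{t\in I}$, with horizon $|I|$ in place of $T$. \Cref{lem:first-order-partition} then gives subintervals $I_j\subseteq I$ with $P_{I_j}\le B/|I_j|^\veps$, and their number $K_I$ obeys the same count with $(T,P_T)$ replaced by $(|I|,P_I)$. On each $I_j$, invoke \cref{thm:improper-gap} after reindexing $I_j$ as $[n]$. This produces at most two pieces $I_{j,s}$ with comparators $\z_{j,s}\in\YY$.
\begin{itemize}
  \item In the convex case, take $B=D_\XX/2$ and $\veps=0$. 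Part (i) then gives nonpositive approximation error on each $I_j$.
  \item In the curved cases, take $B=\sqrt{R_T}$ and $\veps=1/2$. \Cref{lem:surrogate-regularity} guarantees $\c_t\in\YY$, so part (ii) gives approximation error at most $H_hR_T/8$ per $I_j$.
\end{itemize}
Adding and subtracting $h_t(\z_{j,s})$ gives the analogue of \cref{eq:main-regret-decomposition} restricted to $I$.

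\textbf{Step 3: interval regret on the pieces.} Each piece $I_{j,s}$ is an interval of $[T]$, and the OIR guarantee of $\AA$ in \cref{dfn:first-order-algorithm} holds simultaneously for all intervals and all comparators in $\YY$. Hence $\ireg_{I_{j,s}}(\z_{j,s})\le\Ot(\sqrt{|I_{j,s}|})$, respectively at most $R_T$. Here $R_T=\Ot(d\log T)$ or $\Ot(\log T)$; the logarithmic factors in $T$ are absorbed in $\Ot$.

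\textbf{Step 4: summation.} In the convex case, Cauchy--Schwarz over at most $2K_I=\OO(1+P_I)$ pieces with total length $|I|$ gives $\Ot(\sqrt{|I|(1+P_I)})\le\Ot(\sqrt{|I|}+\sqrt{P_I|I|})$. In the curved cases the bound is
\begin{equation*}
  (2+H_h/8)\,K_I R_T=\OO\sbr{R_T+R_T^{2/3}P_I^{2/3}|I|^{1/3}},
\end{equation*}
which yields the stated rates once $R_T$ is substituted.

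The main obstacle is bookkeeping rather than any new inequality. The points to verify are:
\begin{itemize}
  \item \Cref{thm:improper-gap} needs only $\u_t\in\BB(D_\XX/2)$ and the local path-length condition on the piece, both of which hold after reindexing.
  \item The partition count in \cref{lem:first-order-partition} is stated for a generic sequence, so it applies verbatim with horizon $|I|$; the degenerate case $P_I=0$ gives $K_I=1$.
  \item The choices of $B$ and $\veps$ are made in hindsight. They may therefore depend on $I$, while the algorithm itself does not.
\end{itemize}
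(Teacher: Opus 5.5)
Your proposal is correct and follows the same route as the paper. The paper's proof simply notes that \cref{thm:pointwise-transfer}, \cref{lem:surrogate-regularity}, \cref{lem:first-order-partition}, and \cref{thm:improper-gap} apply to an arbitrary interval $I$ after reindexing, and that the OIR guarantee of $\AA$ holds simultaneously for all subintervals of $[T]$ on the same run; you make explicit the bookkeeping it leaves implicit (hindsight choices of $B$ and $\veps$, the count $K_I$ with $(|I|,P_I)$ replacing $(T,P_T)$, and the final summation).
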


\begin{prf}[Proof of \cref{cor:interval-dynamic}]
  The proof of \cref{thm:algorithm-reduction} combines \cref{thm:pointwise-transfer,lem:surrogate-regularity,lem:first-order-partition,thm:improper-gap} with the OIR interface in \cref{dfn:first-order-algorithm}.
  These results apply to any interval $I\subseteq[T]$ after reindexing, so the same argument gives the claimed bounds with $T$ and $P_T$ replaced by $|I|$ and $P_I$, respectively.
  The OIR guarantees hold simultaneously for all intervals and interval comparators on the same run.
\end{prf}

\end{document}